\newcommand{\Comb}{\textsc{Comb}\xspace} %
\newcommand{\Agg}{\textsc{Agg}\xspace}
\newcommand{\Readout}{\textsc{Readout}\xspace}
\newcommand{\best}[1]{\textbf{\textcolor{red!80!black}{#1}}}
\newcommand{\secondbest}[1]{\textbf{\textcolor{blue!80!black}{#1}}}
\newcommand{\err}[1]{\scalebox{0.7}{\,$\pm$ #1}}
\newcommand{\tp}{\mathsf{T}}
\theoremstyle{plain}
\newtheorem{theorem}{Theorem}[section]
\newtheorem{proposition}[theorem]{Proposition}
\newtheorem{lemma}[theorem]{Lemma}
\newtheorem{corollary}[theorem]{Corollary}
\theoremstyle{definition}
\theoremstyle{remark}
\icmltitlerunning{XIMP: Cross Graph Inter-Message Passing}
\begin{document}

\twocolumn[
  \icmltitle{XIMP: Cross Graph Inter-Message Passing for Molecular Property Prediction}

  \icmlsetsymbol{equal}{*}

  \begin{icmlauthorlist}
    \icmlauthor{Anatol Ehrlich}{univie,docs}
    \icmlauthor{Lorenz Kummer}{univie,docs}
    \icmlauthor{Vojtech Voracek}{univie}
    \icmlauthor{Franka Bause}{univie}
    \icmlauthor{Nils M.~Kriege}{univie,ds}
  \end{icmlauthorlist}

  \icmlaffiliation{univie}{Faculty of Computer Science, University of Vienna, Vienna, Austria}
  \icmlaffiliation{docs}{Doctoral School Computer Science, University of Vienna, Vienna, Austria}
  \icmlaffiliation{ds}{Research Network Data Science, University of Vienna, Vienna, Austria}

  \icmlcorrespondingauthor{Nils M.~Kriege}{nils.kriege@univie.ac.at}

  \icmlkeywords{Graph Neural Networks, Hierarchical Graphs, Drug Discovery, Molecular Property Prediction, ADMET}

  \vskip 0.3in
]

\printAffiliationsAndNotice{}

\begin{abstract}
    Accurate molecular property prediction is central to drug discovery, yet graph neural networks often underperform in data-scarce regimes and fail to surpass traditional fingerprints. We introduce \emph{cross-graph inter-message passing} (XIMP), which performs message passing both \emph{within} and \emph{across} multiple related graph representations.
    For small molecules, we combine the molecular graph with scaffold-aware junction trees and pharmacophore-encoding extended reduced graphs, integrating complementary abstractions.
    While prior work is either limited to a single abstraction or non-iterative communication across graphs, XIMP supports an arbitrary number of abstractions and both direct and indirect communication between them in each layer. Across ten diverse molecular property prediction tasks, XIMP outperforms state-of-the-art baselines in most cases, leveraging interpretable abstractions as an inductive bias that guides learning toward established chemical concepts, enhancing generalization in low-data settings.
\end{abstract}

\section{Introduction} \label{sec:introduction}

In recent years, the field of graph representation learning has experienced substantial research interest with a strong focus on improving neural architectures. The graph model for representing real-world objects as graphs has received only limited attention, even though it often involves various design choices, including the level of abstraction. We argue that simultaneously using multiple alternative graph models can introduce a suitable inductive bias, thereby improving sample efficiency and predictive performance. We investigate this hypothesis in the domain of cheminformatics, where machine learning methods are widely used in the early stage of drug discovery~\cite{drugdiscovery}.

Classical methods in this domain rely on molecular graphs with atom and bond labels and encode specific substructures as bit vectors called \emph{chemical fingerprints}~\citep{rogers_extended-connectivity_2010, daylight_daylight_2008, durant_reoptimization_2002}. The predictive performance of fingerprints may vary across tasks, leading to the development of tailored fingerprints~\citep{durant_reoptimization_2002,erg}.
More recently, graph neural networks (GNNs) have been applied to molecular graphs to learn task-specific representations without relying on domain experts or feature engineering.~\citep{gnn1, gnn2, gnn3, wieder_compact_2020}. However, recent studies indicate that on key tasks, GNNs often fail to surpass standard fingerprint-based methods~\cite{stepisnik_comprehensive_2021,jiang_could_2021,dablander_exploring_2023}, with gains limited to some larger datasets~\cite{jiang_could_2021,deng_systematic_2023}.
Unfortunately, in many molecular property prediction tasks, high-quality data is scarce and may be insufficient to observe the methodological advantages of GNNs.

A common remedy to improve the sample efficiency of neural architectures is injecting domain knowledge into the model, e.g., via chemical structure augmentation~\citep{magar_auglichem_2022} or by incorporating representation-specific symmetries~\cite{atz_geometric_2021,cremer2023equivariant}. These inductive biases align models with chemical principles, boosting generalization in low-data regimes: equivariant models can enforce physical symmetries, while augmentations expose property-preserving variations. 
Such domain knowledge is implicitly encoded in different graph models for small molecules. In particular, so-called \emph{reduced graphs} provide a complementary bias by abstracting molecules into functional groups, rings, and pharmacophores, and relating them via edges~\cite{ft,erg,reducedgraphs}, thereby emphasizing features central to protein-ligand recognition and activity.
While reduced graphs are often derived from molecular graphs and thus do not introduce additional information, they emphasize chemically relevant patterns, enabling suitable architectures to learn more informative representations.
However, the integration of interpretable abstractions and their cross-communication within GNNs remains underexplored. In cheminformatics, multiple abstractions have been (i) processed sequentially~\citep{ergsimilar}, or, when simultaneous, either (ii) message passing is limited to local aggregation~\citep{functionalgroup}, (iii) communication occurs only indirectly via the molecular graph~\citep{li2024neural}, or (iv) only a single abstraction is used~\citep{himp,li2024neural,ergsimilar,wollschlager_expressivity_2024}.

\subsection{Our Contribution} \label{sec:contributions}
We propose \emph{cross graph inter-message passing} (XIMP), performing message passing iteratively within and across an arbitrary number of related graph representations. 
Specifically for cheminformatics, we combine the molecular graph with two complementary abstractions: the junction tree (JT)~\citep{pmlr-v80-jin18a}, which captures hierarchical fragment organization, and extended reduced graphs (ErG)~\citep{erg}, which encode pharmacophoric features and topology. We investigate different message-passing topologies, including direct communication between the abstractions and indirect communication via the central molecular graph.
Our method generalizes prior inter-message passing methods and can improve their expressivity.
Empirically, we compare XIMP against various GNN baselines and its closest competitors.
We show that XIMP outperforms these state-of-the-art models across many molecular property prediction tasks, indicating that it effectively leverages complementary graph abstractions. Moreover, XIMP surpasses state-of-the-art fixed representations based on ECFP~\cite{rogers_extended-connectivity_2010}, demonstrating the potential of feature learning for molecular property prediction.

\subsection{Related Work} \label{sec:bg}
Various standard GNNs can operate on molecular graphs~\cite{gnn2,gcn,gin,graphsage}. However, as the expressivity of approaches based on message passing is generally limited by the Weisfeiler-Leman (1-WL) test, they do not yield universal function approximators for graphs~\citep{express,wlsurvey}. They can even fail on tasks such as small-cycle detection~\citep{cycledetection2}, crucial for properties tied to local structures (e.g., aromatic rings). Remedies include invariant graph networks~\citep{ign1,ign2}, relational pooling~\citep{pooling,cycledetection2}, and higher-order WL extensions~\citep{higherorder}. While these techniques are often computationally demanding, a lightweight approach to improve expressivity is to combine graph models. Hierarchical inter-message passing (HIMP) combines the molecular graph with JTs, allowing, for example, to distinguish decalin and bicyclopentyl~\cite{himp}, which are indistinguishable by 1-WL based on their molecular graphs, thereby capturing subtle, biologically relevant variations.

However, integrating chemically interpretable reduced graphs and modeling their communication within message passing remains underexplored. JTs have been employed primarily for molecule generation~\citep{junction1,pmlr-v80-jin18a}. For property prediction, RG-MPNN~\citep{ergsimilar} applies an ErG-like reduction but processes graphs sequentially, unlike HIMP’s simultaneous scheme. Recent works employ multiple reductions, including JT, ErG, and functional-group graphs~\citep{functionalgroup}, but restrict message passing to each graph and pool via a super-node~\citep{allreduced,kengkanna2024enhancing}. \citet{wollschlager_expressivity_2024} introduce substructure-aware biases, though limited to a single fragment abstraction and neighbor-only communication. Their bias is mainly topological (rings, paths, junctions), lacking explicit aromaticity, pharmacophoric rules, or electronic effects. Thus, while multiple chemical abstractions have been used, prior work does not fully exploit inter-graph communication.

Generic graph abstractions have been proposed for graph pooling, including differentiable methods for learning hierarchical abstractions~\citep{ying2018hierarchical,topk}. More recently, bidirectional message passing between a graph and its abstraction has been proposed to improve the long-range performance.
Neural atoms~\citep{li2024neural} introduce fixed-size reduced graphs, introducing shortcuts for communication between distant atoms. For general graphs, \citet{finder2025improving} show that message passing between abstractions can mitigate oversquashing and improve long-range performance.

\section{Preliminaries} \label{sec:preliminaries}

This section presents graph abstractions for molecules, neural architectures for single and multiple graphs, and the notation used throughout this work.

\begin{figure*}
    \centering
    \begin{subfigure}[t]{\columnwidth}
        \centering
        \resizebox{\columnwidth}{!}{\input{figures/tikzfigure_mol_to_jt}}
        \caption{The junction tree decomposition converts a molecular structure (a1) into a molecular graph (a2), where nodes are atoms and edges are bonds. A cluster graph (a3) is built by grouping atoms in the same ring and the atoms of non-cyclic edges into clusters, connecting two clusters if they share atoms. Cycles in the cluster graph are removed by adding the shared atom as a separate cluster, yielding the junction tree (a4).}
        \label{fig:junction_tree}
    \end{subfigure}\hfill
    \begin{subfigure}[t]{\columnwidth}
        \centering
        \resizebox{\columnwidth}{!}{\begin{tikzpicture}
  
\pgfdeclarelayer{background}\pgfsetlayers{background,main}

\tikzstyle{every node}=[font=\LARGE]
\node [font=\bfseries, scale=3.5] at (49.0,21.0) {\textbf{(b2) Extended reduced Graph (ErG)}};

\node [font=\bfseries, scale=3.5] at (31.0, 21.) {\textbf{(b1) Molecular Graph}};

\clip (22,2.3) rectangle (59,22);  
 
\draw  (42.25,13) circle (0cm);
\draw [ fill={rgb,255:red,173; green,127; blue,168} , line width=2pt ] (43.25,11.25) circle (0.5cm) ;
\draw [ fill={rgb,255:red,173; green,127; blue,168} , line width=2pt ] (48.25,11.25) circle (0.5cm) ;
\draw [ fill={rgb,255:red,204; green,0; blue,0} , line width=2pt ] (50.75,10) circle (0.5cm) ;
\draw [ fill={rgb,255:red,204; green,0; blue,0} , line width=2pt ] (50.75,12.5) circle (0.5cm) ;
\draw [ fill={rgb,255:red,237; green,212; blue,0} , line width=2pt ] (53.25,12.5) circle (0.5cm) ;
\draw [ fill={rgb,255:red,114; green,159; blue,207} , line width=2pt ] (55.75,13.75) circle (0.5cm) ;
\draw [ fill={rgb,255:red,114; green,159; blue,207} , line width=2pt ] (55.75,11.25) circle (0.5cm) ;
\draw [ fill={rgb,255:red,114; green,159; blue,207} , line width=2pt ] (58.25,11.25) circle (0.5cm) ;
\draw [line width=2.75pt, short] (51.25,12.5) -- (52.75,12.5);
\draw [line width=2.75pt, short] (56.25,11.25) -- (57.75,11.25);
\draw [line width=2.75pt, short] (48.75,11.5) -- (50.25,12.25);
\draw [line width=2.75pt, short] (53.75,12.75) -- (55.25,13.5);
\draw [line width=2.75pt, short] (48.75,11) -- (50.25,10.25);
\draw [line width=2.75pt, short] (53.75,12.25) -- (55.25,11.5);
\draw [ fill={rgb,255:red,193; green,125; blue,17} , line width=2pt ] (45.75,12.5) circle (0.5cm) ;
\draw [ fill={rgb,255:red,193; green,125; blue,17} , line width=2pt ] (45.75,10) circle (0.5cm) ;
\draw [ fill={rgb,255:red,114; green,159; blue,207} , line width=2pt ] (53.25,8.75) circle (0.5cm) ;
\draw [line width=2.75pt, short] (50.75,10.5) -- (50.75,12);
\draw [line width=2.75pt, short] (46.25,10) -- (50.25,10);
\draw [line width=2.75pt, short] (43.75,11) -- (45.25,10.25);
\draw [line width=2.75pt, short] (46.25,10.25) -- (47.75,11);
\draw [line width=2.75pt, short] (43.75,11.5) -- (45.25,12.25);
\draw [line width=2.75pt, short] (46.25,12.25) -- (47.75,11.5);
\draw [line width=2.75pt, short] (51.25,9.75) -- (52.75,9);
\draw  (29.5,14) circle (0cm);
\draw [ fill={rgb,255:red,136; green,138; blue,133} , line width=2pt ] (32.5,8.75) circle (0.5cm) ;
\draw [ fill={rgb,255:red,136; green,138; blue,133} , line width=2pt ] (35.25,12.25) circle (0.5cm) ;
\draw [ fill={rgb,255:red,136; green,138; blue,133} , line width=2pt ] (37.75,13.5) circle (0.5cm) ;
\draw [ fill={rgb,255:red,136; green,138; blue,133} , line width=2pt ] (37.75,11) circle (0.5cm);
\draw [ fill={rgb,255:red,136; green,138; blue,133} , line width=2pt ] (40.25,11) circle (0.5cm);
\draw [line width=2.75pt, short] (33.25,12.25) -- (34.75,12.25);
\draw [line width=2.75pt, short] (38.25,11) -- (39.75,11);
\draw [line width=2.75pt, short] (35.75,12.5) -- (37.25,13.25);
\draw [line width=2.75pt, short] (30.75,10) -- (32,9);
\draw [line width=2.75pt, short] (35.75,12) -- (37.25,11.25);
\draw [ fill={rgb,255:red,136; green,138; blue,133} , line width=2pt ] (23,13.5) circle (0.5cm) ;
\draw [ fill={rgb,255:red,136; green,138; blue,133} , line width=2pt ] (23,11) circle (0.5cm) ;
\draw [ fill={rgb,255:red,136; green,138; blue,133} , line width=2pt ] (25.5,9.75) circle (0.5cm) ;
\draw [ fill={rgb,255:red,136; green,138; blue,133} , line width=2pt ] (25.5,14.75) circle (0.5cm) ;
\draw [ fill={rgb,255:red,136; green,138; blue,133} , line width=2pt ] (28,11) circle (0.5cm) ;
\node [font=\large] at (28.5,10.5) {};
\draw [ fill={rgb,255:red,136; green,138; blue,133} , line width=2pt ] (28,13.5) circle (0.5cm) ;
\draw [ fill={rgb,255:red,136; green,138; blue,133} , line width=2pt ] (30.5,14) circle (0.5cm) ;
\draw [ fill={rgb,255:red,136; green,138; blue,133} , line width=2pt ] (30.5,10.5) circle (0.5cm) ;
\draw [ fill={rgb,255:red,136; green,138; blue,133} , line width=2pt ] (32.75,12.25) circle (0.5cm) ;
\draw [line width=2.75pt, short] (23,11.5) -- (23,13);
\draw [line width=2.75pt, short] (28,11.5) -- (28,13);
\node [font=\large] at (23.5,11.75) {};
\node [font=\large] at (51.25,10.75) {};
\node [font=\large] at (28.5,11.75) {};
\draw [line width=2.75pt, short] (23.5,13.75) -- (25,14.5);
\draw [line width=2.75pt, short] (26,10) -- (27.5,10.75);
\draw [line width=2.75pt, short] (25,10) -- (23.5,10.75);
\draw [line width=2.75pt, short] (27.5,13.75) -- (26,14.5);
\draw [line width=2.75pt, short] (32.25,12.5) -- (31,13.75);
\node [font=\large] at (32.25,12.75) {};
\draw [line width=2.75pt, short] (30,10.5) -- (28.5,11);
\draw [line width=2.75pt, short] (30,14) -- (28.5,13.5);
\draw [line width=2.75pt, short] (32.25,12) -- (31,10.75);
\node [font=\large, color={rgb,255:red,78; green,154; blue,6}] at (49,14.5) {};
\node [font=\large, color={rgb,255:red,32; green,74; blue,135}] at (50.25,14.25) {};
\draw [ color={rgb,255:red,204; green,0; blue,0}, line width=2.76pt, ->, >=Stealth, dashed] (23,10.5) .. controls (24.5,7.75) and (37.5,3) .. (43,10.75);
\draw [ color={rgb,255:red,204; green,0; blue,0}, line width=2.76pt, ->, >=Stealth, dashed] (23,14) .. controls (26.75,20) and (40.75,17.25) .. (43.25,11.75);
\draw [ color={rgb,255:red,204; green,0; blue,0}, line width=2.76pt, ->, >=Stealth, dashed] (25.5,9.25) .. controls (27.25,6.75) and (40.25,5.75) .. (43,10.75);
\draw [ color={rgb,255:red,204; green,0; blue,0}, line width=2.76pt, ->, >=Stealth, dashed] (28,10.5) .. controls (29.5,8) and (39.5,5.25) .. (43,10.75);
\draw [ color={rgb,255:red,204; green,0; blue,0}, line width=2.76pt, ->, >=Stealth, dashed] (25.5,15.25) .. controls (27,18) and (40.5,17.5) .. (43.25,11.75);
\draw [ color={rgb,255:red,204; green,0; blue,0}, line width=2.76pt, ->, >=Stealth, dashed] (28,14) .. controls (28.75,17.25) and (40.75,17) .. (43.25,11.75);
\draw [ color={rgb,255:red,32; green,74; blue,135}, line width=2.76pt, ->, >=Stealth, dashed] (28,10.5) .. controls (31,4.5) and (42.5,5) .. (45.5,9.5);
\draw [ color={rgb,255:red,32; green,74; blue,135}, line width=2.76pt, ->, >=Stealth, dashed] (28,14) .. controls (30,20) and (42.25,18.25) .. (45.5,13);
\draw [ color={rgb,255:red,78; green,154; blue,6}, line width=2.76pt, ->, >=Stealth, dashed] (30.5,10) .. controls (31,4.5) and (42,1.25) .. (48,10.75);
\draw [ color={rgb,255:red,78; green,154; blue,6}, line width=2.76pt, ->, >=Stealth, dashed] (30.5,14.5) .. controls (33.5,21.75) and (43.5,20.25) .. (48.25,11.75);
\draw [ color={rgb,255:red,78; green,154; blue,6}, line width=2.76pt, ->, >=Stealth, dashed] (28,14) .. controls (30.25,21) and (44.5,19.5) .. (48.25,11.75);
\draw [ color={rgb,255:red,78; green,154; blue,6}, line width=2.76pt, ->, >=Stealth, dashed] (28,10.5) .. controls (29.25,4.5) and (43.75,3.25) .. (48,10.75);
\node [font=\LARGE, color={rgb,255:red,78; green,154; blue,6}] at (28.75,8) {};
\draw [ color={rgb,255:red,78; green,154; blue,6}, line width=2.76pt, ->, >=Stealth, dashed] (32.75,12.75) .. controls (34.75,18.25) and (42.75,21.5) .. (48.25,11.75);
\draw [ color={rgb,255:red,32; green,74; blue,135}, line width=2.76pt, ->, >=Stealth, dashed] (31,10.5) .. controls (37.75,10) and (45.75,3) .. (50.5,9.5);
\draw [ color={rgb,255:red,32; green,74; blue,135}, line width=2.76pt, ->, >=Stealth, dashed] (32.75,12.75) .. controls (37.25,22.25) and (45.25,20) .. (50.5,13);
\draw [ color={rgb,255:red,32; green,74; blue,135}, line width=2.76pt, ->, >=Stealth, dashed] (32.5,8.25) .. controls (33.25,2.75) and (48,2.5) .. (53.25,8.25);
\draw [ color={rgb,255:red,32; green,74; blue,135}, line width=2.76pt, ->, >=Stealth, dashed] (35.25,12.75) .. controls (41.75,21.5) and (49.25,18.25) .. (53,13);
\draw [ color={rgb,255:red,32; green,74; blue,135}, line width=2.76pt, ->, >=Stealth, dashed] (37.75,14) .. controls (46.25,22) and (49.75,18.75) .. (55.25,14);
\draw [ color={rgb,255:red,32; green,74; blue,135}, line width=2.76pt, ->, >=Stealth, dashed] (37.75,10.5) .. controls (38.25,4.5) and (53.75,4.75) .. (55.75,10.75);
\draw [ color={rgb,255:red,32; green,74; blue,135}, line width=2.76pt, ->, >=Stealth, dashed] (40.25,11.5) .. controls (44.75,19.75) and (54,20.5) .. (58.25,11.75);

\def\bgpad{10.5mm}
\begin{pgfonlayer}{background}
  \filldraw[
    rounded corners=14pt,
    fill=white,
    draw=white,
    line width=0.8pt
  ]
    ($(current bounding box.south west)+(-\bgpad,-\bgpad)$)
    rectangle
    ($(current bounding box.north east)+(\bgpad,\bgpad)$);
\end{pgfonlayer}

\end{tikzpicture}}
        \caption{Molecular graph (b1) and extended reduced graph (b2) constructed in four steps: (1) adjust atom charges to reflect physiological conditions; (2) assign H-bond donor/acceptor properties; (3) identify and tag endcap groups (lateral hydrophobic features, including thioethers); (4) add a ring centroid (\emph{aromatic}/\emph{hydrophobic}), link it to substituted atoms/bridgeheads, remove unsubstituted atoms, and keep bonds among the rest.}
        \label{fig:erg}
    \end{subfigure}
    \caption{Molecular graph abstractions: (\subref{fig:junction_tree}) junction tree and (\subref{fig:erg}) extended reduced graph. Arrows indicate node mappings between the graphs; their colors encode singleton or group memberships.}
    \label{fig:mol_abstractions}
\end{figure*}

\subsection{Molecular Graphs and Reduced Graphs} \label{sec:molecular_graphs}
A molecular graph encodes the structure of a molecule, representing atoms as nodes and bonds as edges, see Figure~\ref{fig:mol_abstractions}a2. Graph models representing groups of atoms by single nodes and the relation between groups by edges are referred to as \emph{reduced graphs}~\citep{reducedgraphs}. While often extracted from the molecular graph, a reduced graph exposes key features and topological structure more explicitly, enabling more generalized analyses of the represented molecule. XIMP operates on two complementary reduced graphs that provide interpretable abstractions matching our chemistry setting. (i) The \emph{junction tree} (JT)~\citep{pmlr-v80-jin18a}, also used in HIMP, decomposes molecules into overlapping fragments such as rings and bridges, such that their overlaps define a tree structure, see Figure~\ref{fig:mol_abstractions}a4. Similar tree-based abstractions are widely used in cheminformatics~\citep{ft}. (ii) The \emph{extended reduced graph} (ErG)~\citep{erg} encodes pharmacophoric features and atom-level topological relationships relevant to biological activity, see Figure~\ref{fig:erg}. JTs and ErGs provide abstract molecular representations at different granularities. JTs condense the molecular graph into a minimal set of fragment nodes and simplify their relationships by organizing them into a tree structure. ErGs, on the other hand, are more fine-grained: they treat rings with explicit bridgehead handling and add node features encoding higher-level pharmacophoric information absent in JTs and standard molecular graphs. Together with the molecular graph, these views offer complementary, non-redundant perspectives on molecular topology. A detailed description is available in Appendix~\ref{sec:reduced_detail}.

\subsection{Message Passing Graph Neural Networks} \label{sec:gnn}

GNNs learn node representations using message passing (MP) to propagate information across nodes and edges~\citep{gnn2}. 
In molecular property prediction, GNNs operate on a molecular graph $G = (V, E)$, where $V$ is a set of atoms and $E$ is a set of bonds of a molecule. Each node $v$ in $V(G)$, and edge $(u, v)$ in $E(G)$ is endowed with an initial real-valued feature vector $\bm{x}^{(0)}_u$ and $\bm{e}_{(u, v)}$ representing, for example, the atom and bond type, respectively. The node embeddings are then updated via message-passing layers by aggregating their neighbors' embeddings. The $l$-th layer recursively updates the embedding $\bm{x}_v^{(l)}$ of node $v$ via
\begin{align*}
    \bm{m}_v^{(l)} &= \Agg_{\bm{\theta}_1^{(l)}}\!\left(
    \left\{\!\!\!\left\{
    \left( \bm{x}_w^{(l-1)},\, \bm{e}_{(w,v)} \right)
    \mid w \in \mathcal{N}(v)
    \right\}\!\!\!\right\}
    \right) \\
    \bm{x}_v^{(l)} &= \Comb_{\bm{\theta}_2^{(l)}}\!\left(
    \bm{x}_v^{(l-1)},\, \bm{m}_v^{(l)}
    \right),
\end{align*}
where $\{\!\!\{ \cdot \}\!\!\}$ denotes a multiset and $\mathcal{N}(v) = \{u \in V(G) \mid (u,v) \in E(G)\}$ the neighbors of the node $v$. The functions \Agg and \Comb are parameterized by $\bm{\theta}_1^{(l)}$ and $\bm{\theta}_2^{(l)}$, respectively, which are optimized during training. Ultimately, after passing through $L$ layers, the resulting node embeddings are combined via
\begin{equation*}
    \bm{h}_G = \Readout\left( \left\{\!\!\!\left\{ \bm{x}_v^{(L)} \mid v \in V(G) \right\}\!\!\!\right\} \right).
\end{equation*}
By training the model parameters, the resulting graph embedding is optimized for predicting the desired molecular property.

\subsection{Hierarchical Inter-Message Passing} \label{sec:himp}
HIMP~\citep{himp} is a GNN architecture that leverages two molecular graph models and suitable GNN models. One model operates on the molecular graph with edge labels (GIN-E), while the other operates on its corresponding JT without edge labels (GIN). Both models use message passing as described in Section~\ref{sec:gnn}. Additionally, an \emph{inter-message passing} (IMP) model facilitates information exchange between the two graph representations in each layer, as shown in Figure~\ref{fig:com_flow}. For a complete outline of HIMP, see Appendix~\ref{apx:himp}.

\begin{figure*}[t]
\centering
\resizebox{0.98\textwidth}{!}{%
  \input{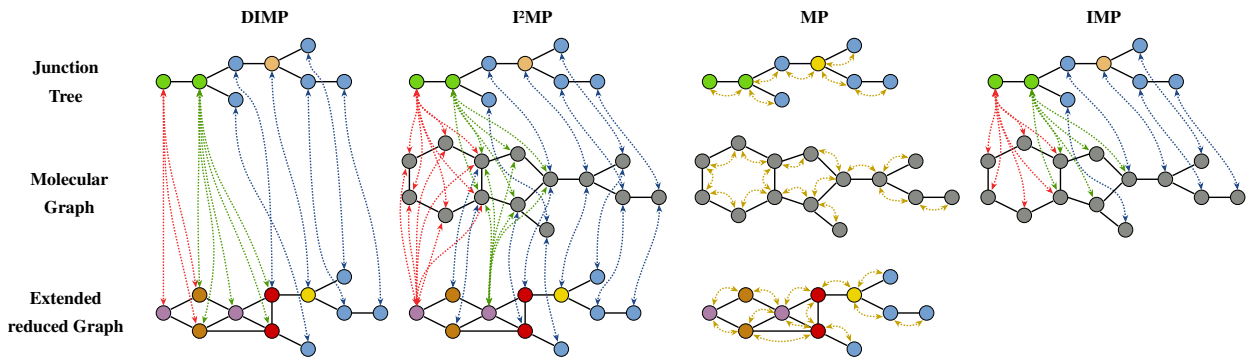}
}%
\caption{Visualization of communication flows in XIMP and HIMP. XIMP employs DIMP, I\textsuperscript{2}MP, and MP, while HIMP uses only IMP and MP. Node colors denote graph semantics; arrows indicate bidirectional message passing: red/green for one-to-many ring–abstraction relations, blue for one-to-one cross-graph relations, and yellow for one-to-one within-graph relations.}
\label{fig:com_flow}
\end{figure*}

\section{Cross Graph Inter-Message Passing} \label{sec:ximp}
XIMP performs message passing within and across arbitrary numbers of related graph representations, generalizing existing inter-message passing techniques. We consider a setting with a distinguished graph and multiple abstractions and investigate direct and indirect communication flows between them.
While our approach is domain-agnostic and not technically limited to hierarchical representations or molecular abstractions, we describe and illustrate it in the context of a central molecular graph and two or more complementary reduced graphs.

\subsection{Input and Formalism}
We denote the molecular graph by $G$ and its $n$ reduced graphs by $T_1, T_2, \dots, T_n$. All graphs have initial node embeddings, denoted by $\bm{X}^{(0)} \in \mathbb{R}^{|V(G)|\times d}$ for the molecular graph $G$ and $\bm{T}_i^{(0)} \in \mathbb{R}^{|V(T_i)|\times d}$ for the reduced graphs $T_i$, $i \in[1,n]$, respectively. We assume all node embeddings to be of dimension $d$, and, if necessary, apply a learnable transformation to project the given node attributes into a $d$-dimensional space.
A node in a reduced graph typically corresponds to multiple nodes in the molecular graph (see Figure~\ref{fig:mol_abstractions}). Formally, the $i$-th reduced graph $T_i$ is linked to the molecular graph via a node correspondence matrix $\bm{S}_i$ in $\{0,1\}^{|V(G)| \times |V(T_i)|}$. A value of 1 at position $(u,v)$ indicates that the $u$-th node of $G$ is represented by the $v$-th node of $T_i$. Hence, for graph abstractions, each row in $\bm{S}_i$ typically contains multiple $1$ entries. This formulation naturally accounts for overlapping abstractions, such as fused rings or functional groups, in which a node of $G$ relates to multiple reduced nodes. These matrices are used to pass messages among corresponding nodes in different graph representations.
The $l$-th layer of XIMP computes the node embeddings $\bm{X}^{(l)}$ and $\bm{T}_i^{(l)}$ from the node embeddings $\bm{X}^{(l-1)}$ and $\bm{T}_i^{(l-1)}$, for $i\in[1,n]$, respectively. We discuss the inter-message passing in the following and describe the overall framework in Section~\ref{sec:framework}.

\subsection{Indirect Inter-Message Passing (I\textsuperscript{2}MP)}\label{sec:i2imp}
Information exchange between the molecular graph and its reduced graphs occurs within each layer. Stacking multiple layers enables indirect communication between the reduced graphs (see Figure~\ref{fig:com_flow}).
The inter-message passing step computes intermediate embedding matrices for $i \in [1,n]$ according to
\begin{align*}
    \bm{X}_i^{(l)} &= \sigma \bigg( \widetilde{\bm{S}_i} \bm{T}_i^{(l-1)} \bm{W}_{i, 1}^{(l)} \bigg) 
\end{align*}
\begin{align*}
    \bm{M}_i^{(l)} &= \sigma \bigg( \widetilde{\bm{S}_i^\tp} \bm{X}^{(l-1)} \bm{W}_{i, 2}^{(l)} \bigg),
\end{align*}
where $\bm{W}_{i, 1}^{(l)}, \bm{W}_{i, 2}^{(l)} \in \mathbb{R}^{d \times d}$ are trainable parameters within layer $l$, and $\sigma$ denotes a non-linearity. 
The matrix $\widetilde{\bm{S}_i}=\bm{D}^{-1}\bm{S}_i$ with $\bm{D}=\mathrm{diag}(\bm{S}_i\bm{1})$ is the row-normalized version of the node correspondence matrix linking $G$ and $T_i$. Similarly, $\widetilde{\bm{S}^\tp_i}=(\bm{S}_i\bm{D}^{-1})^\tp$ is the row normalized transposed of $\bm{S}_i$. Note that the molecular graph $G$ plays a distinguished role, and we assume a given correspondence between its nodes and the nodes of all the reduced graphs $T_i$.

\subsection{Direct Inter-Message Passing (DIMP)} \label{sec:dimp}
Beyond exchanging information between the original molecular graph $G$ and each reduced graph $T_i$, we also allow direct communication between pairs of reduced graphs. We assume that each matrix $\bm{S}_i$ encodes a left-total relation, which ensures that every node of $G$ belongs to at least one node in each reduced graph $T_i$. For every pair $(i,k)$ with $i \neq k$, messages from $T_k$ to $T_i$ are computed by
\begin{align}
    \bm{M}_{k\rightarrow i}^{(l)} &= \sigma \left( \widetilde{\bm{S}}_{ik} \bm{T}_k^{(l-1)} \bm{W}_{k\rightarrow i}^{(l)} \right) \label{eq:dimp:msg}\\
    \widetilde{\bm{S}}_{ik} &= \bm{D}_{T,i}^{-1} \bm{S}_i^{\tp} \bm{D}_{G,k}^{-1} \bm{S}_k, \label{eq:dimp:norm}
\end{align}
where $\bm{W}_{k\rightarrow i}^{(l)} \in \mathbb{R}^{d\times d}$ are trainable parameters, $\sigma$ is a non-linearity,  $\bm{D}_{T,i}=\mathrm{diag}(\bm{S}_i^{\tp}\bm{1})$ stores for each node of $T_i$ the number of corresponding nodes in $G$, and $\bm{D}_{G,k}=\mathrm{diag}(\bm{S}_k\bm{1})$ gives for all nodes of $G$ the number of related nodes in $T_k$.
The left-normalizing matrix $\bm{D}_{G,k}^{-1}$ ensures that a node of $G$ with multiple memberships in $T_k$ does not contribute disproportionately when projecting from $T_k$ to $G$, while the right normalizing matrix $\bm{D}_{T,i}^{-1}$ balances the aggregation of multiple nodes of $G$ into a node of $T_i$. This prevents overweighting due to multiplicity, ensures stable message magnitudes across overlaps, and reflects the chemical intuition that atoms participating in several functional roles (e.g., ring junctions or substituents) should distribute their influence fairly among all abstractions. We formalize this by bounding the maximum absolute row and column sum of the matrix  $\widetilde{\bm{S}}_{ik} \bm{T}_k^{(l)}$ in Eq.~\eqref{eq:dimp:norm} before applying the linear transformation and activation (proof in Appendix~\ref{apx:dimpnorm}).

\begin{proposition} \label{prop:dimpnorm}
    Let $\bm{S}_k \in \{0,1\}^{|V(G)|\times |V(T_k)|}$ and $\bm{S}_i \in \{0,1\}^{|V(G)|\times |V(T_i)|}$ represent left-total relations between the nodes of $G$ and those of $T_k$ and $T_i$, respectively. Let $\widetilde{\bm{S}}_{ik}$ be defined according to Eq.~\eqref{eq:dimp:norm} and $\widetilde{\bm{M}}_{k\rightarrow i}^{(l)} = \widetilde{\bm{S}}_{ik} \bm{T}_k^{(l)}$.
    Then, the following statements hold:
    \begin{enumerate}[noitemsep, topsep=0pt]
        \item There exists $\alpha \in \mathbb{R}$ with $\frac{|V(T_i)|}{|V(T_k)|} \leq \alpha \leq |V(T_i)|$ such that
        $\|\widetilde{\bm{M}}_{k\rightarrow i}^{(l)}\|_\infty \leq \|\bm{T}^{(l)}_k\|_\infty$ and $\|\widetilde{\bm{M}}_{k\rightarrow i}^{(l)}\|_1 \leq \alpha \|\bm{T}^{(l)}_k\|_1$.
        \item For any $\bm{x} \in \mathbb{R}^{d}$ and $\bm{T}_k = \bm{1}\bm{x}^\tp \in \mathbb{R}^{|V(T_k)|\times d}$, it holds $\widetilde{\bm{S}}_{ik}\bm{T}_k = \bm{1}\bm{x}^\tp$.
    \end{enumerate}
\end{proposition}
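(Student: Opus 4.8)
\emph{Overall approach.} The whole proposition collapses onto one structural observation: the composed, doubly-normalized matrix $\widetilde{\bm{S}}_{ik}=\bm{D}_{T,i}^{-1}\bm{S}_i^{\tp}\bm{D}_{G,k}^{-1}\bm{S}_k$ is \emph{row-stochastic}. Once this is in place, statement~2 and the $\infty$-norm half of statement~1 are immediate, and the $1$-norm half follows from an elementary ``max $\ge$ average, max $\le$ total'' estimate on the column sums. So the first step is to prove row-stochasticity. Nonnegativity of $\widetilde{\bm{S}}_{ik}$ is clear, being a product of entrywise-nonnegative matrices. For the row sums I would evaluate $\widetilde{\bm{S}}_{ik}\bm{1}$ from the inside out: $\bm{S}_k\bm{1}$ is the vector of row sums of $\bm{S}_k$, which is exactly the diagonal of $\bm{D}_{G,k}$, so $\bm{S}_k\bm{1}=\bm{D}_{G,k}\bm{1}$ and hence $\bm{D}_{G,k}^{-1}\bm{S}_k\bm{1}=\bm{1}$; similarly $\bm{S}_i^{\tp}\bm{1}$ is the vector of column sums of $\bm{S}_i$, i.e.\ the diagonal of $\bm{D}_{T,i}$, giving $\bm{D}_{T,i}^{-1}\bm{S}_i^{\tp}\bm{1}=\bm{1}$. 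Chaining: $\widetilde{\bm{S}}_{ik}\bm{1}=\bm{D}_{T,i}^{-1}\bm{S}_i^{\tp}\bm{D}_{G,k}^{-1}\bm{S}_k\bm{1}=\bm{D}_{T,i}^{-1}\bm{S}_i^{\tp}\bm{1}=\bm{1}$. The one place the hypothesis bites is here: left-totality of $\bm{S}_k$ is precisely what rules out a zero diagonal entry of $\bm{D}_{G,k}$, so $\bm{D}_{G,k}^{-1}$ exists; invertibility of $\bm{D}_{T,i}$ (each reduced node covering at least one atom) is already implicit in the definition of $\widetilde{\bm{S}}_{ik}$.

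\emph{Statement 2 and the $\infty$-norm bound.} Statement~2 is then a one-liner: if $\bm{T}_k=\bm{1}\bm{x}^{\tp}$, then $\widetilde{\bm{S}}_{ik}\bm{T}_k=(\widetilde{\bm{S}}_{ik}\bm{1})\bm{x}^{\tp}=\bm{1}\bm{x}^{\tp}$. For the $\infty$-norm bound (using the convention $\|\cdot\|_\infty=$ maximum absolute row sum), each row of $\widetilde{\bm{M}}_{k\rightarrow i}^{(l)}=\widetilde{\bm{S}}_{ik}\bm{T}_k^{(l)}$ is a convex combination of the rows of $\bm{T}_k^{(l)}$ (the weights being a row of $\widetilde{\bm{S}}_{ik}$, which sums to $1$), so by the triangle inequality its $\ell_1$ norm is at most $\max_c\|(\bm{T}_k^{(l)})_{c,:}\|_1=\|\bm{T}_k^{(l)}\|_\infty$; taking the maximum over rows yields $\|\widetilde{\bm{M}}_{k\rightarrow i}^{(l)}\|_\infty\le\|\bm{T}_k^{(l)}\|_\infty$. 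Equivalently, submultiplicativity of the induced $\infty$-norm together with $\|\widetilde{\bm{S}}_{ik}\|_\infty=1$.

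\emph{The $1$-norm bound and the bracketing of $\alpha$.} I would take $\alpha:=\|\widetilde{\bm{S}}_{ik}\|_1$, the maximum absolute column sum of $\widetilde{\bm{S}}_{ik}$. Submultiplicativity of the induced $1$-norm (or a direct per-column triangle-inequality estimate) gives $\|\widetilde{\bm{M}}_{k\rightarrow i}^{(l)}\|_1\le\alpha\|\bm{T}_k^{(l)}\|_1$. It then remains to show $|V(T_i)|/|V(T_k)|\le\alpha\le|V(T_i)|$: by row-stochasticity the entries of $\widetilde{\bm{S}}_{ik}$ sum to exactly $|V(T_i)|$ (one unit per row), spread over $|V(T_k)|$ columns, so the largest column sum is at most the grand total $|V(T_i)|$ and at least the average $|V(T_i)|/|V(T_k)|$ — using only nonnegativity of the entries.

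\emph{Main obstacle.} There is no deep difficulty; the single idea is recognizing the telescoping cancellation in the first step, which makes $\widetilde{\bm{S}}_{ik}$ row-stochastic. The only points requiring genuine care are (i) pinning down exactly where left-totality (and non-emptiness of reduced nodes) is used, namely to invert the degree matrices, and (ii) fixing the matrix-norm conventions consistently ($\|\cdot\|_\infty=$ max absolute row sum, $\|\cdot\|_1=$ max absolute column sum) so that the submultiplicativity and the ``total equals $|V(T_i)|$'' counting argument line up; everything else is bookkeeping.
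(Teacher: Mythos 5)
Your proposal is correct and follows essentially the same route as the paper's proof: establish that $\widetilde{\bm{S}}_{ik}$ is row-stochastic (the paper composes the two row-stochastic factors $\bm{D}_{T,i}^{-1}\bm{S}_i^{\tp}$ and $\bm{D}_{G,k}^{-1}\bm{S}_k$, you telescope $\widetilde{\bm{S}}_{ik}\bm{1}=\bm{1}$ directly, which amounts to the same thing), then derive the $\infty$-norm bound from rows being convex combinations, the $1$-norm bound with $\alpha=\|\widetilde{\bm{S}}_{ik}\|_1$ bracketed between the average and the total column sum, and statement~2 from $\widetilde{\bm{S}}_{ik}\bm{1}=\bm{1}$. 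Your explicit remark on where left-totality (invertibility of the degree matrices) enters is a small clarification the paper leaves implicit, but it does not change the argument.
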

Note that if $\bm{S}_k$ partitions the nodes of $G$, then $\bm{D}_{G,k} = \bm{I}$ and the expression reduces to the simpler form $\widetilde{\bm{S}}_{ik} = \bm{D}_{T,i}^{-1}\bm{S}_i^{\tp}\bm{S}_k$.

\subsection{Overall Framework}\label{sec:framework}
The $l$-th layer of XIMP computes the node embeddings $\bm{X}^{(l)}$ and $\bm{T}_i^{(l)}$ by combining intermediate embedding matrices defined in Sections~\ref{sec:i2imp} and~\ref{sec:dimp} with the embeddings from standard message passing as described in Section~\ref{sec:gnn}.
For each graph representation, we apply a suitable GNN layer and denote the computed node embeddings of the molecular graph $G$ and its reduced graphs $T_i$ by $\bm{\mathfrak{X}}^{(l)}$ and $\bm{\mathfrak{T}}^{(l)}_i$, for all $i \in [1,n]$, respectively.
Using the intermediate embedding matrices from inter-message passing, we compute the new node embeddings as output of the $l$-th layer according to
\begin{align}
  \bm{X}^{(l)} &= \bm{\mathfrak{X}}^{(l)} + \sum_{i=1}^n \bm{X}^{(l)}_i \label{eq:update1}\\
  \bm{T}_i^{(l)} &= \bm{\mathfrak{T}}_i^{(l)} + \bm{M}^{l}_i + \sum_{\substack{k \in [1,n] \\ k \neq i}} \bm{M}_{k\rightarrow i}^{(l)}. \label{eq:update2}
\end{align}
Note that all terms in Eqs.~\eqref{eq:update1},~\eqref{eq:update2} are computed from $\bm{X}^{(l-1)}$ and $\bm{T}_i^{(l-1)}$, for $i\in[1,n]$, respectively, i.e., the output of the previous layer. This clear distinction was not made in HIMP~\cite{himp} and derived works~\cite{wollschlager_expressivity_2024}, where embeddings are mapped from the molecular graph to the junction tree, passed within the junction tree, and then mapped back to the molecular graph in the same iteration. An advantage of our method is that the receptive field of each node expands uniformly and predictably across layers.

Finally, we define our $\Readout$ function as the learnable combination of mean pooling applied to the node embeddings of the individual graph representations,  according to
\begin{equation*}
    \bm{h}_G = \frac{1}{|V(G)|}\sum_{i=1}^{|V(G)|} \bm{x}_i^{(L)} \bm{W}_0~\bigoplus_{j=1}^n~\frac{1}{|V(T_j)|}\sum_{i=1}^{|V(T_j)|} \bm{t}_{j,i}^{(L)}\bm{W}_j.
\end{equation*}
Here, $\bigoplus$ denotes a graph level aggregation (e.g., concatenation or summation) and $\bm{W}_0, \bm{W}_1, \dots, \bm{W}_n \in \mathbb{R}^{d \times d}$ are trainable matrices. The vector $\bm{x}_i^{(L)} \in \mathbb{R}^{d}$ denotes the final embedding of the $i$-th node of $G$ and, similarly, $\bm{t}_{j, i}^{(L)} \in \mathbb{R}^{d}$ is the final embedding of the $i$-th node of the $j$-th reduced graph.

\begin{table*}[t]
    \caption{ADMET, Potency, and MoleculeNet results. Cells show 10-run mean/std of test MAE based on the hyperparameters that resulted in the lowest validation score. GNN abstracts GCN, GIN, GAT, GraphSAGE (i.e., best-performing GNN chosen as representative; fine-grained results in Appendix~\ref{apx:extended}). \best{Dark red bold} = best; \secondbest{dark blue bold} = second-best (ties: all best are marked).}
    \label{tab:potency-admet-moleculenet}
    
    \setlength{\tabcolsep}{4pt} %
    \renewcommand{\arraystretch}{1.2}

    \begin{center}
        \begin{small}
            \begin{tabular}{l ccccc cc ccc}
                \toprule
                \textbf{Model} & \multicolumn{5}{c}{\textbf{ADMET} $\downarrow$} & \multicolumn{2}{c}{\textbf{Potency} $\downarrow$} & \multicolumn{3}{c}{\textbf{MoleculeNet} $\downarrow$} \\
                \cmidrule(lr){2-6} \cmidrule(lr){7-8} \cmidrule(lr){9-11}
                & HLM & KSOL & LogD & MDR1 & MLM & MERS & SARS & ESOL & FreeSolv & Lipo \\
                \midrule
                ECFP & 0.56\err{0.02} & 0.42\err{0.01} & 0.86\err{0.02} & 0.38\err{0.01} & \secondbest{0.55}\err{0.02} & 0.79\err{0.01} & 0.57\err{0.01} & 1.21\err{0.06} & 2.94\err{0.06} & 0.73\err{0.02} \\
                \rowcolor{gray!10} 
                GNN  & 0.56\err{0.03} & 0.48\err{0.07} & \best{0.68}\err{0.07} & 0.37\err{0.02} & 0.64\err{0.05} & 0.71\err{0.02} & 0.45\err{0.05} & \best{0.71}\err{0.04} & \best{1.58}\err{0.17} & \best{0.52}\err{0.02} \\
                HIMP & \secondbest{0.54}\err{0.05} & \best{0.35}\err{0.04} & 0.80\err{0.05} & \secondbest{0.35}\err{0.03} & 0.56\err{0.06} & \best{0.64}\err{0.03} & \best{0.39}\err{0.04} & \secondbest{0.80}\err{0.07} & \secondbest{1.77}\err{0.15} & \best{0.52}\err{0.02} \\
                \rowcolor{gray!10} 
                XIMP & \best{0.53}\err{0.08} & \secondbest{0.37}\err{0.04} & \secondbest{0.69}\err{0.03} & \best{0.31}\err{0.03} & \best{0.49}\err{0.02} & \secondbest{0.69}\err{0.05} & \secondbest{0.41}\err{0.03} & 0.82\err{0.09} & 1.83\err{0.17} & \best{0.52}\err{0.02} \\
                \bottomrule
            \end{tabular}
        \end{small}
    \end{center}
\end{table*}

\begin{table*}[t]
    \caption{ADMET, Potency, and MoleculeNet results. Cells show 10-run mean/std test MAE for the best hyperparameters for each model chosen on the given test dataset. GNN abstracts GCN, GIN, GAT, GraphSAGE (i.e., best-performing GNN model shown; fine-grained results in Appendix~\ref{apx:extended}). \best{Dark red bold} = best; \secondbest{dark blue bold} = second-best (ties: all best are marked).}
    \label{tab:potency-admet-moleculenet-extended}
    
    \setlength{\tabcolsep}{4pt} %
    \renewcommand{\arraystretch}{1.2}

    \begin{center}
        \begin{small}
            \begin{tabular}{l ccccc cc ccc}
                \toprule
                \textbf{Model} & \multicolumn{5}{c}{\textbf{ADMET} $\downarrow$} & \multicolumn{2}{c}{\textbf{Potency} $\downarrow$} & \multicolumn{3}{c}{\textbf{MoleculeNet} $\downarrow$} \\
                \cmidrule(lr){2-6} \cmidrule(lr){7-8} \cmidrule(lr){9-11}
                & HLM & KSOL & LogD & MDR1 & MLM & MERS & SARS & ESOL & FreeSolv & Lipo \\
                \midrule
                ECFP & \best{0.48}\err{0.02} & 0.38\err{0.02} & 0.72\err{0.01} & 0.36\err{0.02} & \secondbest{0.54}\err{0.02} & 0.75\err{0.01} & 0.52\err{0.02} & 1.16\err{0.05} & 2.83\err{0.07} & 0.73\err{0.02} \\
                \rowcolor{gray!10} 
                GNN  & 0.54\err{0.05} & 0.45\err{0.08} & \best{0.67}\err{0.03} & 0.39\err{0.03} & 0.56\err{0.05} & 0.71\err{0.02} & 0.42\err{0.02} & \best{0.71}\err{0.04} & \best{1.62}\err{0.14} & \best{0.52}\err{0.02} \\
                HIMP & \secondbest{0.49}\err{0.04} & \secondbest{0.34}\err{0.06} & 0.81\err{0.07} & \secondbest{0.33}\err{0.03} & 0.57\err{0.06} & \best{0.64}\err{0.03} & \secondbest{0.41}\err{0.06} & \secondbest{0.79}\err{0.07} & 1.82err{0.10} & \secondbest{0.54}\err{0.02} \\
                \rowcolor{gray!10} 
                XIMP & \best{0.48}\err{0.07} & \best{0.33}\err{0.03} & \secondbest{0.69}\err{0.06} & \best{0.32}\err{0.01} & \best{0.52}\err{0.07} & \secondbest{0.68}\err{0.04} & \best{0.38}\err{0.03} & 0.83\err{0.08} & \secondbest{1.77}\err{0.16} & \best{0.52}\err{0.01} \\
                \bottomrule
            \end{tabular}
        \end{small}
    \end{center}
\end{table*}

\subsection{Expressivity and Complexity}
While HIMP couples the molecular graph with a single junction-tree abstraction, XIMP supports multiple reduced graphs with both indirect and direct inter-message passing, strictly subsuming HIMP. XIMP admits richer representations via cross-abstraction embeddings, structured chemical priors, and multi-resolution coarsenings. We formalize this relation between HIMP and XIMP's expressivity as follows (proofs in Appendix~\ref{apx:experessivity}).

\begin{theorem} \label{thm:mono}
    Let the hypothesis classes realized by HIMP and XIMP with depth $L$, hidden dimension $d$, and number of abstractions $n$ be denoted by $\mathcal{H}_{\mathrm{HIMP}}(L,d)$ and $\mathcal{H}_{\mathrm{XIMP}}(L,d,n)$. Then for any $L,d$ and $n \geq 1$,
    \(
    \mathcal{H}_{\mathrm{HIMP}}(L,d) \;\subseteq\; \mathcal{H}_{\mathrm{XIMP}}(L,d,n).
    \)
\end{theorem}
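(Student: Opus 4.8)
The plan is to prove the inclusion constructively: given any $f\in\mathcal{H}_{\mathrm{HIMP}}(L,d)$ realized by a HIMP model with its atom-GNN, tree-GNN and per-layer inter-message-passing maps, I exhibit an XIMP model of the same depth $L$, hidden dimension $d$, and with $n$ abstractions that computes $f$ on every input. The natural choice is to take the first abstraction $T_1$ to be exactly the junction tree used by HIMP, with its node-correspondence matrix $\bm{S}_1$, and to treat the remaining abstractions $T_2,\dots,T_n$ as auxiliary components that will be neutralized so they influence neither $\bm{X}^{(\cdot)}$, nor $\bm{T}_1^{(\cdot)}$, nor the final readout $\bm{h}_G$.

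First I would neutralize the auxiliary abstractions and the direct inter-graph channel, reducing XIMP to a two-graph model coupling only $G$ and $T_1$. In every layer $l$, set all DIMP weights $\bm{W}_{k\rightarrow i}^{(l)}=\bm{0}$; assuming $\sigma(\bm{0})=\bm{0}$ (true for $\mathrm{ReLU}$ and the other activations considered, and the I\textsuperscript{2}MP/DIMP maps carry no bias), Eq.~\eqref{eq:dimp:msg} gives $\bm{M}_{k\rightarrow i}^{(l)}=\bm{0}$, so the DIMP sum in Eq.~\eqref{eq:update2} vanishes. Likewise set the pull-in weights $\bm{W}_{i,1}^{(l)}=\bm{0}$ for all $i\ge 2$, so $\bm{X}_i^{(l)}=\bm{0}$ and Eq.~\eqref{eq:update1} collapses to $\bm{X}^{(l)}=\bm{\mathfrak{X}}^{(l)}+\bm{X}_1^{(l)}$; and set $\bm{W}_j=\bm{0}$ for $j\ge 2$ in the \Readout. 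Since the per-graph GNN of $G$ reads only $\bm{X}^{(l-1)}$ and the per-graph GNN of $T_1$ reads only $\bm{T}_1^{(l-1)}$, the pair $(\bm{X}^{(l)},\bm{T}_1^{(l)})$ now forms a closed subsystem and $\bm{h}_G$ depends only on $(\bm{X}^{(L)},\bm{T}_1^{(L)})$; the auxiliary $\bm{T}_2^{(l)},\dots,\bm{T}_n^{(l)}$ may evolve but are inert.

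Next I would match this reduced model to HIMP layer by layer: choose the XIMP per-graph GNN layers $\bm{\mathfrak{X}}^{(l)}$, $\bm{\mathfrak{T}}_1^{(l)}$ to be HIMP's atom-GNN (GIN-E) and tree-GNN (GIN), the initial $d$-dimensional projections to coincide, the weights $\bm{W}_{1,1}^{(l)},\bm{W}_{1,2}^{(l)}$ and the activation $\sigma$ so that $\bm{X}_1^{(l)}$ and $\bm{M}_1^{(l)}$ realize HIMP's tree-to-atom and atom-to-tree message maps, and $\bm{W}_0$ together with the aggregator $\bigoplus$ in the \Readout to reproduce HIMP's pooling. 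The residual-sum updates~\eqref{eq:update1}--\eqref{eq:update2} then reproduce HIMP's per-layer combination of intra-graph and cross-graph signals, and induction on $l$ yields $(\bm{X}^{(L)},\bm{T}_1^{(L)})$, hence $\bm{h}_G$, hence $f$, identical to HIMP's.

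The main obstacle is reconciling the per-layer schedules: HIMP maps the molecular graph into the junction tree, passes messages within the tree, and maps the result back within the same iteration, whereas both~\eqref{eq:update1} and~\eqref{eq:update2} read only the layer-$(l-1)$ embeddings, and HIMP's intra- and inter-graph contributions must moreover be shown to combine additively, as the residual form of~\eqref{eq:update1}--\eqref{eq:update2} requires. I would handle this by tracing a single HIMP layer carefully and showing it factors as a composition of its two GNN sublayers with a bounded number of correspondence-aggregation/affine/$\sigma$ maps applied to $(\bm{X}^{(l-1)},\bm{T}_1^{(l-1)})$, each of which can be absorbed into the corresponding XIMP component ($\bm{\mathfrak{X}}^{(l)}$, $\bm{\mathfrak{T}}_1^{(l)}$, $\bm{X}_1^{(l)}$, or $\bm{M}_1^{(l)}$), checking that the row-normalizations $\widetilde{\bm{S}_1},\widetilde{\bm{S}_1^{\tp}}$ and the activation conventions are compatible and that the additions in~\eqref{eq:update1}--\eqref{eq:update2} introduce no extra terms. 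This bookkeeping — rather than any deep idea — is the delicate part of a complete proof; should the precise within-layer order of HIMP fail to flatten into a single XIMP layer, it suffices to argue that HIMP's full $L$-layer computation is realizable within XIMP's, which is all the hypothesis-class inclusion demands.
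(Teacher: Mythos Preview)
Your approach is essentially the same as the paper's: reduce XIMP to HIMP by taking $T_1$ to be the junction tree and zeroing out all DIMP weights, the I\textsuperscript{2}MP and readout weights for $i\ge 2$, so that only the $(G,T_1)$ subsystem remains active. You are in fact more careful than the paper's own proof, which simply asserts that this ``reduces XIMP to HIMP'' without addressing the two subtleties you flag --- the within-layer scheduling mismatch (which the paper itself notes in Section~\ref{sec:framework}) and the row-normalization $\widetilde{\bm S}_1$ versus HIMP's unnormalized $\bm S$ --- so your caution about the bookkeeping is well placed even if the paper treats the result as essentially immediate.
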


By construction, XIMP inherits and extends HIMP’s ability to exceed 1-WL expressivity (see Section~\ref{sec:bg}).
As some approaches~\citep{ergsimilar,allreduced, kengkanna2024enhancing} use multiple graph representations of small molecules by combining their final graph embeddings, we investigate the specific impact of inter-message passing on expressivity. To this end, we construct a unified graph comprising the molecular graph and its reduced graphs, augmented with inter-graph edges, that captures message passing in XIMP.
Formally, the \emph{compound graph} $\mathcal{G}(G; T_1, T_2,\dots, T_n)=(\mathcal{V}, \mathcal{E})$ of a molecule is defined by
\begin{align*}
  \mathcal{V} & = V(G) \uplus \biguplus_{i=1}^n V(T_i) \\
  \mathcal{E} & = E(G) \uplus \biguplus_{i=1}^n E(T_i) \uplus E_{\mathrm{X}} \\
  E_{\mathrm{X}} &= \{(v,u) \in \mathcal{V} \times \mathcal{V} \mid \exists i \in [1,n]\colon \bm{S}_i[v,u]=1\}.
\end{align*}
The message passing in XIMP using MP and I\textsuperscript{2}MP follows the structure of this compound graph.

\begin{proposition} \label{prop:stru}
    There exist two molecules whose molecular graphs $G$ and $G'$, junction trees $T$ and $T'$, and extended reduced graphs $R$ and $R'$ are each indistinguishable by unlabeled $1$-WL, while $\mathcal{G}(G; T, R)$ is distinguishable from $\mathcal{G}(G'; T', R')$  by unlabeled $1$-WL.
\end{proposition}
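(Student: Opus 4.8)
The plan is to exhibit a concrete pair of molecules $M$ and $M'$ and verify, by running color refinement, that their molecular graphs $G,G'$, their junction trees $T,T'$, and their extended reduced graphs $R,R'$ are pairwise $1$-WL-equivalent, while the two compound graphs $\mathcal{G}(G;T,R)$ and $\mathcal{G}(G';T',R')$ are not. The mechanism to exploit is that $1$-WL on the compound graph also refines the bipartite atom--fragment incidence carried by the cross-edges $E_{\mathrm{X}}$, information that none of $G$, $T$, $R$ sees on its own; so the pair must be designed so that $G$, $T$, $R$ are individually ``the same'' (up to $1$-WL, and most robustly up to isomorphism for $T$ and $R$) while the way each junction-tree node and each ErG node connects to its constituent atoms differs in a way $1$-WL detects once everything is assembled. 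Note that, by Section~\ref{sec:bg}, the combination of a molecular graph with a single junction tree can already exceed $1$-WL on $1$-WL-equivalent molecular graphs, so it suffices to produce such a pair that \emph{additionally} has $1$-WL-equivalent junction trees and extended reduced graphs.

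Naive candidates do not work: for small bicyclic hydrocarbons such as decalin versus bicyclopentyl --- the textbook witnesses that a molecular graph plus its junction tree exceeds $1$-WL --- the junction trees already differ in size, so they fail the $T \equiv T'$ requirement (and, separately, their extended reduced graphs have different degree sequences). I would therefore take $G$ and $G'$ to be $1$-WL-equivalent but non-isomorphic graphs whose local structure is uniform enough that color refinement cannot split them (for instance a pair of near-regular polycyclic carbon skeletons, or a molecular realization of a Cai--F\"urer--Immerman-style local gadget), chosen so that the junction-tree construction (ring perception / SSSR, merging of bridged rings, cycle-breaking by promoting shared atoms to clusters, spanning-tree extraction) and the ErG construction (charge adjustment, donor/acceptor and endcap tagging, ring-centroid insertion, deletion of unsubstituted ring atoms, retention of bonds among the survivors) produce isomorphic outputs $T \cong T'$ and $R \cong R'$, while $G$ itself still records the difference between the two scaffolds.

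Given such a pair, the three indistinguishability claims are discharged directly: $G \equiv_{1\text{-WL}} G'$ by comparing the stable colorings of color refinement (immediate if $G,G'$ are regular), and $T \cong T'$, $R \cong R'$ by exhibiting the isomorphisms forced by the symmetric design. For the separation, I would build both compound graphs with their inter-graph edges and run color refinement on each, then point to the first round at which the color histograms diverge; the natural place to look is a junction-tree or ErG node together with the atoms incident to it through $E_{\mathrm{X}}$, since that is exactly where the two constructions attach to different atom-sets. Since a discrepancy in the multiset of colors at one round forces a discrepancy at every later round (the round-$(t{+}1)$ color of a vertex determines its round-$t$ color), this certifies that $1$-WL distinguishes $\mathcal{G}(G;T,R)$ from $\mathcal{G}(G';T',R')$.

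The main obstacle is the construction in the second paragraph: making $G$, $T$, and $R$ simultaneously $1$-WL-indistinguishable while keeping the compound graphs distinguishable. The conditions conflict --- any structural change that equalizes the junction trees or the reduced graphs tends either to destroy $1$-WL-equivalence of the molecular graphs or to collapse the very feature the compound graph is meant to detect --- and because the junction-tree and ErG constructions are not purely local (SSSR choices, bridgehead detection, cycle-breaking, centroid insertion), one must track them carefully to be certain that the two reductions really coincide. Once an explicit pair is pinned down, the remaining work is a finite, routine color-refinement computation.
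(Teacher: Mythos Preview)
Your proposal has a genuine gap: you never exhibit a concrete pair of molecules. You describe the verification procedure and acknowledge that ``the main obstacle is the construction,'' but that construction \emph{is} the proof. Without it, nothing has been shown.

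More importantly, you are searching in the wrong direction, which is why the construction seems hard. You assume you need $G$ and $G'$ to be $1$-WL-equivalent but \emph{non-isomorphic} (hence the talk of CFI gadgets and near-regular polycyclic skeletons), with the difference living in $G$ and being erased by the reductions $T$ and $R$. The paper's approach is the opposite and much simpler. Since the claim is about \emph{unlabeled} $1$-WL, you may take $G$, $T$, and $R$ to be pairwise \emph{isomorphic} as unlabeled graphs --- so all three indistinguishability claims are trivial --- and arrange for the difference to live entirely in the cross-edges $E_{\mathrm{X}}$, i.e., in the assignment matrices $\bm{S}_i$. The reductions $T$ and $R$ are built from the \emph{labeled} molecule, so their node-to-atom correspondences can differ even when the resulting unlabeled graphs $T,T'$ and $R,R'$ coincide.

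Concretely, the paper uses the positional isomers $3$-hydroxypyridine and $4$-hydroxypyridine. As unlabeled graphs, both molecular graphs are a $6$-cycle with one pendant vertex; both junction trees are a single edge (ring cluster adjacent to one bond cluster); both ErGs are the $4$-path $O\text{--}C_\ast\text{--}\mathrm{centroid}\text{--}N$. But the ErG construction, being label-aware, singles out the ring nitrogen and the hydroxyl-bearing carbon $C_\ast$ as distinguished nodes, and the cross-edges anchor precisely those two ring atoms. Their distance along the $6$-cycle is $2$ in one isomer and $3$ in the other, which unlabeled $1$-WL on the compound graph detects in a few rounds. No exotic $1$-WL-hard pair is needed; two textbook positional isomers suffice.
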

Proposition~\ref{prop:stru} suggests XIMP’s expressivity benefits potentially transfer beyond molecular datasets, where abstractions or entire graphs may be unlabeled.

Regarding complexity, XIMP’s per-layer runtime scales linearly with molecular graph size but adds a quadratic cost in hidden dimension and number of abstractions due to inter-message passing. Parameter count rises from HIMP’s single quadratic term to additional linear and quadratic contributions, though modest in practice with few abstractions. Memory is dominated by node embeddings, with extra overhead from mapping matrices that also scale linearly and quadratically. Overall, XIMP maintains linear scaling with graph size and quadratic scaling with the number of abstractions, which we consider a reasonable trade-off given the observed gains in predictive performance.
For a detailed analysis, see Appendix~\ref{apx:complexity}.

\subsection{Assumptions and Limitations} \label{sec:assumptions}
In our cheminformatics setting, we assume that abstractions such as the junction tree and the extended reduced graph provide useful inductive biases, and that the preprocessing (e.g., ring detection, pharmacophore tagging) yields accurate graph abstraction mappings. The complexity of XIMP grows polynomially in the number of abstractions $n$, which we restrict to $n \leq 2$ in practice.

\section{Experimental Evaluation} \label{sec:eval}

\begin{figure*}
    \begin{center}
        \includegraphics[trim=0.25cm 0.25cm 0.25cm 0.25cm, clip,width=\textwidth]{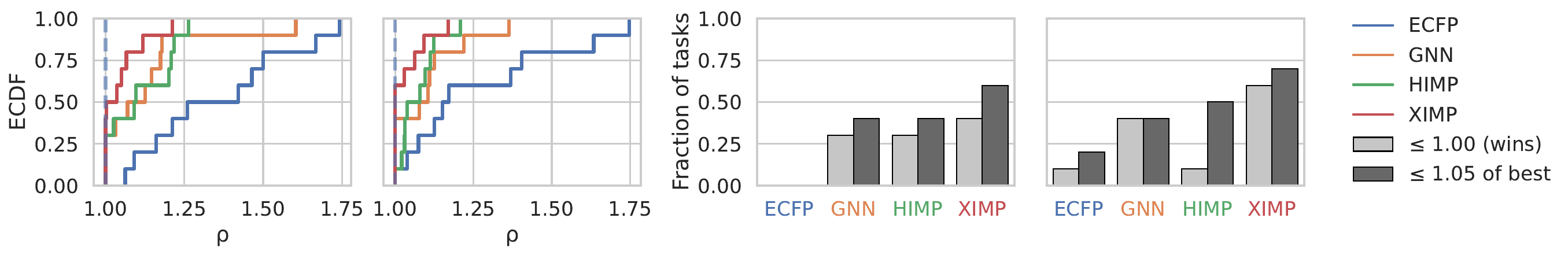}
        \caption{Performance profiles of architectures on ADMET, Potency, and MoleculeNet, computed from Tables~\ref{tab:potency-admet-moleculenet} and~\ref{tab:potency-admet-moleculenet-extended}. Panels 1-2 (left): Empirical Cumulative Distribution Functions (ECDFs) of the performance ratio $\rho=\text{MAE}(\text{model},\text{task})/\min_{\text{arch}}\text{MAE}(\text{task})$, where $\rho=1$ is best and $\rho>1$ quantifies degradation; curves closer to the top-left indicate better overall performance. Panels 3-4 (right): discrete summaries across tasks. Bars show the fraction of tasks a model \emph{wins} ($\rho=1$) or is within a practical tolerance ($\rho\le\tau$, here $\tau=1.05$). Panels 1, 3 use test MAE with hyperparameters chosen by validation; panels 2, 4 use test MAE under best hyperparameters per model and dataset.}
        \label{fig:profiles}
    \end{center}
\end{figure*}

To evaluate the hypothesis that XIMP enhances predictive performance, we benchmarked it against HIMP, several widely used GNN baselines, and Extended Connectivity Fingerprints (ECFP)~\citep{rogers_extended-connectivity_2010}, a standard representation in molecular tasks. We briefly describe our experimental framework
and, to ensure reproducibility, provide details in Appendix~\ref{app:datasets_hardware} and ~\ref{app:hyper_params}.  For robust evidence, we conduct extensive hyperparameter search and ablations, totaling approximately $1$M training runs.

\subsection{Models and Datasets}
Our setup consists of an encoder followed by a regression head. The regression head is implemented as a $k$-layer multilayer perceptron (MLP) with ReLU activations, applied to the graph-level embeddings produced by the encoder. For encoding, we compare GCN~\citep{gcn}, GIN~\citep{gin}, GAT~\citep{velickovic2018graph}, GraphSAGE~\citep{graphsage}, and HIMP (Section~\ref{sec:himp}), plus an ECFP-based non-learnable baseline where only the regression head is trained. Together, these cover convolutional, attention-based, and inductive aggregations. We evaluate on ten prediction tasks from MoleculeNet~\citep{wu2018moleculenet} and the Polaris challenge~\citep{asap_admet_2025, asap_potency_2025}, a recent dataset targeting ADMET (Absorption, Distribution, Metabolism, Excretion, Toxicity) endpoints and drug-candidate potency.

\subsection{Training, Hyperparameter Search} \label{ssec:tr_hyp_ha}
We trained all models with Adam and MAE loss. Hyperparameters were chosen via stratified 10-fold cross-validation on binned regression targets to mitigate target imbalance during selection. We deliberately used target-based rather than purely scaffold-based CV: with small datasets (Appendix~\ref{app:datasets_hardware}), scaffold $k$-fold splits yield imbalanced target distributions and (at high $k$ with few scaffolds) small, uneven folds, making model selection unreliable. Hyperparameter tuning was performed via grid search. For final evaluation, we held out a $10\%$ \emph{scaffold}-split test set to assess chemical-space generalization, retrained the selected configuration on the remaining data, and evaluated with MAE. We chose this protocol because, to our knowledge, no \emph{stratified} scaffold split exists for regression:~\citet{joeres2025data} consider classification only, and~\citet{zhang2025rethinking} control input-graph similarity rather than target stratification. This induces a mismatch---stratified CV for selection vs.\@ scaffold-based testing---that can complicate hyperparameter choice, as validation folds might not match the test distribution; we nonetheless adopt this conservative setup because scaffold splitting yields a structurally distinct and more realistic test set. Accordingly, we report both, mean/std of the test MAE for hyperparameters selected via mean validation MAE across stratified folds (Table~\ref{tab:potency-admet-moleculenet}) for final assessment, and mean/std of the test MAE for optimal test-selected hyperparameters on the scaffold-split holdout (Table~\ref{tab:potency-admet-moleculenet-extended}) as an optimistic upper bound on performance to illustrate what could be achieved with an ideal hyperparameter search.

\begin{figure*}[t]
    \centering
    \includegraphics[trim=0.05cm 0.2cm 0.1cm 0.2cm, clip, width=1.0\textwidth]{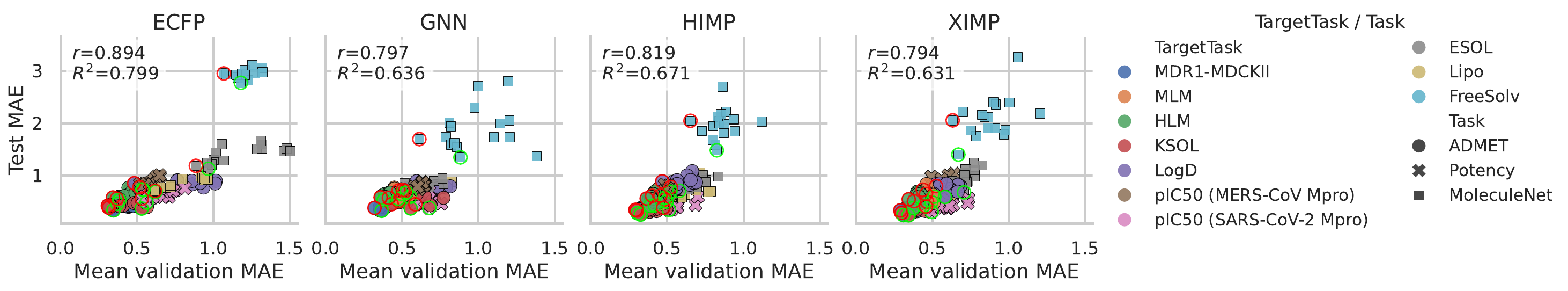}
    \caption{Validation MAE (x-axis) versus test MAE (y-axis) across hyperparameter configurations. For each task, the lowest test MAE is marked by a green circle and the lowest validation MAE by a red circle. Each panel reports the Pearson correlation ($r$) and $R^2$ score. Points closer to the lower left indicate better performance. To improve readability, non-optimal runs (neither red nor green) were randomly subsampled at $150$ per architecture.}
    \label{fig:main_result_2}
\end{figure*}

\begin{figure*}[t]
    \centering
    \includegraphics[trim=0.25cm 0.2cm 0.3cm 0.2cm, clip, width=1.0\textwidth]{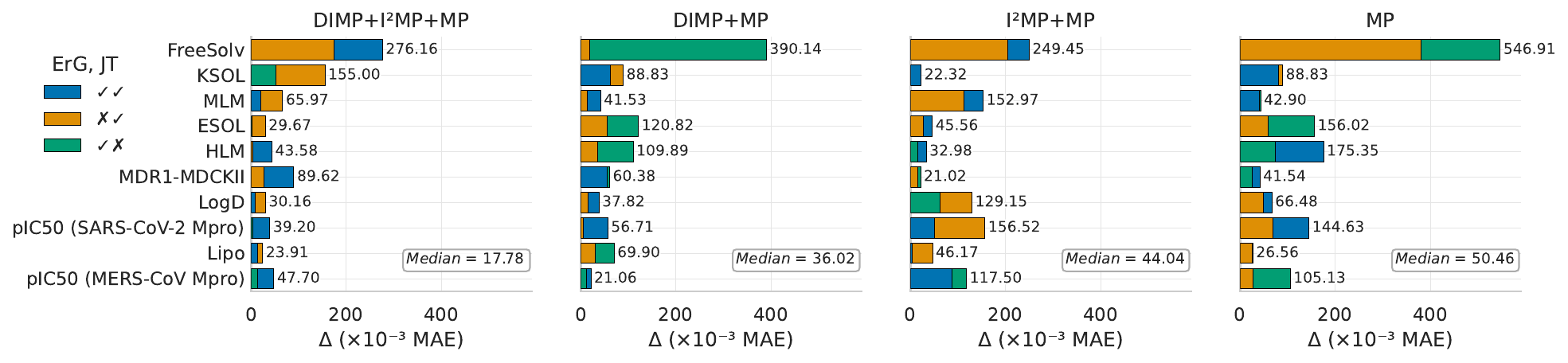}
    \caption{ Relative test error across ErG/JT settings per architecture with medians across targets. Each panel shows $\Delta$ in milli-MAE by target task (x-axis), colored by whether ErG, JT or a combination is used. Here $\Delta = 1000 \times (\mathrm{MAE} - \min_{\text{cfg}}\mathrm{MAE})$, so the best setting is $0$; the other bars report the gap. Bar labels report exact $\Delta$ values, and a horizontal line at $0$ marks the per-target optimum.}
    \label{fig:main_ablation_c}
\end{figure*}

\subsection{Results}
We find that XIMP yields the best predictive performance for the hyperparameters chosen via mean validation MAE in $4/10$ cases, whereas the strongest baseline method (HIMP) likewise outperforms it's competitors on a given dataset in $4/10$ cases (Table~\ref{tab:potency-admet-moleculenet}). We note that choosing our model according to this criterion yields less than optimal results, see Figure~\ref{fig:main_result_2}. This stems from a mismatch between the assumed distributions of target
and scaffolds in the training and test data.

Generally, though, we find that XIMP is the most robust of the evaluated methods. This is reflected in the aggregate view we provide in Figure~\ref{fig:profiles} (left), which shows the per-architecture ECDFs of the results of XIMP and its baselines and is equivalent to the performance profiles introduced by~\citet{dolan2002benchmarking}, a standard method for benchmarking algorithms. XIMP's performance profile dominates all baselines, both for validation-selected and globally best hyperparameters, underscoring its robustness across diverse tasks. Figure~\ref{fig:profiles} (right), which provides a discrete summary perspective across tasks, further underlines XIMP's robustness: more often than any of the baselines, XIMP is either the best performing model measured by its performance ratio $\rho$ or within a practical tolerance of the best performing model. 

On a more granular level, Tables~\ref{tab:potency-admet-moleculenet} and~\ref{tab:potency-admet-moleculenet-extended} both further confirm the strength of XIMP's ability to integrate multiple chemically meaningful and interpretable reduced graphs (junction trees for coarse-grained connectivity, ErGs for fine-grained and pharmacophoric patterns) with the molecular graph itself. This gives it an inductive bias toward functional groups, pharmacophores, and scaffold structures, and, unlike other methods, XIMP also allows for direct message passing between these abstractions, allowing new features to be learned in ways not previously possible. Hence, as expected, for the ADMET tasks that leverage pharmacophoric features (such as HLM, MDR1-MDCKII, MLM), XIMP typically performs better than its competitors, due to its ability to include chemically relevant features (H-Acceptor, H-Donor). 
Conversely, for tasks that rely on the overall structure of the molecule (ESOL, FreeSolv, Lipophilicity), conventional GNNs perform competitively.

In low-data regimes, XIMP's inductive bias also provides a clear advantage. For example, the ADMET dataset comprises only $560$ molecules, in contrast to the substantially larger Potency and MoleculeNet datasets (see Appendix~\ref{app:datasets_hardware}). Under these conditions, XIMP achieves superior performance in $3/5$ or $4/5$ tasks respectively, outperforming competing methods (Tables~\ref{tab:potency-admet-moleculenet} and~\ref{tab:potency-admet-moleculenet-extended}).

\subsection{Hyperparameters and Ablation} \label{sec:extended-results}
We also observe that there is a large discrepancy between test MAE for the hyperparameters chosen via mean validation MAE and globally best test MAE (Figure~\ref{fig:main_result_2}). For the globally best test MAE, we find that XIMP outperforms its baselines in $6/10$ cases and specifically on $4/5$ ADMET tasks, whereas the best-performing baseline methods (GNNs) only accomplish this in $4/10$ cases (the best-performing specific GNN is GIN, which is best in $2/10$ cases, see Appendix~\ref{apx:extended}). While evaluating hyperparameters optimally chosen on the test set naturally constitutes data leakage, we include these results as an optimistic upper bound to performance, indicating the possibility for improvement under perfect tuning.

Finally, we evaluated how well the components of XIMP exploit multiple graph abstractions (i.e., ERG and JT). DIMP+MP is typically the best performing configuration for multi-abstraction settings where ERG and JT are both active (Figure~\ref{fig:main_ablation_c}). In settings where I\textsuperscript{2}MP+MP or I\textsuperscript{2}MP+DIMP+MP are used, XIMP shows a preference for the ERG abstraction. For the configuration without DIMP or I\textsuperscript{2}MP, no clear trend emerges; the utility of abstraction combinations appears largely dataset- and task-dependent. The median relative test error across targets indicates that I\textsuperscript{2}MP+DIMP is most robust to abstraction choice, whereas MP alone is least, underscoring XIMP’s adaptability to diverse abstraction combinations.
A more detailed ablation study is provided in Appendix~\ref{apx:ablation}.

\section{Conclusion} \label{sec:conclusion}
We presented XIMP, a versatile inter-message passing framework that learns across multiple graph abstractions within a single model. XIMP enables both indirect (graph $\leftrightarrow$ abstractions) and direct (abstraction $\leftrightarrow$ abstraction) communication, a learned multi-view readout, and shows potential to mitigate oversquashing and improve long-range communication. Although domain-agnostic, we instantiate it in a chemistry setting with junction trees and extended reduced graphs to demonstrate how interpretable abstractions can be exploited. Across ten diverse property-prediction tasks, XIMP matches or surpasses strong GNN baselines and fixed fingerprints, with gains attributable to explicit cross-abstraction messaging. Our results position multi-abstraction message passing as a principled approach for data-scarce regimes and interpretable graph learning.

Future work includes extending XIMP to additional reductions and attribution tools for tracing predictions across abstractions and extending XIMP to protein design via multi-level graph reductions, e.g., atoms, residues, primary/secondary/tertiary structures. 

\bibliography{ximp}
\bibliographystyle{icml2026}

\newpage
\appendix
\onecolumn

\section{Datasets \& Hardware} \label{app:datasets_hardware}
\paragraph{Datasets}
We evaluated our models on molecular property prediction tasks from two dataset collections: MoleculeNet~\citep{wu2018moleculenet} and Polaris~\citep{asap_admet_2025, asap_potency_2025}. From MoleculeNet, we selected three standard regression benchmarks: solubility (ESOL), hydration free energy (FreeSolv), and lipophilicity (Lipo).
From the Polaris ADMET dataset, we considered four targets relevant to drug absorption and distribution: human liver microsomal stability (HLM), mouse liver microsomal stability (MLM), solubility (KSOL), membrane permeability (MDR1-MDCKII), and lipophilicity (LogD). In addition, we included two potency prediction tasks from Polaris, specifically targeting drug candidate activity against the main proteases of MERS-CoV (MERS-CoV-Mpro) and SARS-CoV-2 (SARS-CoV-2-Mpro), using the pIC50 endpoint. Please find dataset statistics in Table~\ref{tab:datasets} and hardware specifications in Table~\ref{tab:cluster-specs}.

\begin{table}[t]
    \caption{Specifications of GPU compute nodes (node names omitted).}
    \label{tab:cluster-specs}
    \begin{center}
        \begin{small}
            \begin{tabular}{l l l}
                \toprule
                \textbf{GPUs} & \textbf{CPU} & \textbf{Memory} \\
                \midrule
                8$\times$ H100 & Xeon Platinum 8480C (224 thr) & GPU 640\,GB; RAM $\sim$2\,TB \\
                8$\times$ V100 16\,GB & Xeon E5-2698 (80 thr) & GPU 128\,GB; RAM $\sim$500\,GB \\
                4$\times$ H100 & Xeon Platinum 8468V (192 thr) & GPU $\sim$374\,GiB; RAM $\sim$1\,TB \\
                \bottomrule
            \end{tabular}
        \end{small}
    \end{center}
\end{table}

\begin{table}[t]
    \caption{Summary of datasets MoleculeNet, Polaris ADMET, and Polaris Potency used for molecular property and potency prediction.}
    \label{tab:datasets}
    \begin{center}
        \begin{small}
            \begin{tabular}{l l l r}
                \toprule
                \textbf{Dataset Source} & \textbf{Property} & \textbf{Endpoint} & \textbf{Molecules} \\
                \midrule
                MoleculeNet & Solubility & ESOL & 1,128 \\
                 & Hydration Free Energy & FreeSolv & 642 \\
                 & Lipophilicity & Lipo & 4,200 \\
                \midrule
                Polaris ADMET & Metabolic Stability (Human) & HLM & 560 \\
                 & Metabolic Stability (Mouse) & MLM & 560 \\
                 & Solubility & KSOL & 560 \\
                 & Distribution Coefficient & LogD & 560 \\
                 & Membrane Permeability & MDR1-MDCKII & 560 \\
                \midrule
                Polaris Potency & MERS-CoV Mpro & pIC50 & 1,328 \\
                 & SARS-CoV-2 Mpro & pIC50 & 1,328 \\
                \bottomrule
            \end{tabular}
        \end{small}
    \end{center}
\end{table}

\section{Reduced Graphs Construction} \label{sec:reduced_detail}
\paragraph{Junction Trees}
To construct a \textit{junction tree}, the process begins by converting the molecular structure into a graph. This graph is then partitioned into substructures by identifying all simple rings using RDKit’s \texttt{GetSymmSSSR} function~\citep{rdkit}. Next, all edges not belonging to any cycle are identified. Each simple ring and each such edge is treated as a separate cluster, represented as a node in the cluster graph, where a cluster consists of the set of atoms in the corresponding ring, or the two atoms connected by the corresponding edge. Edges are added between clusters if they share at least one atom.

The resulting cluster graph may itself contain cycles, which can lead to non-unique mappings from the molecular graph to the cluster graph. This can occur, for example, when an atom has three or more substituents. To prevent this, the intersecting atom is added as its own cluster, and the bonds responsible for the cycle are removed. Eliminating these cycles ensures an injective mapping from the molecular graph to the cluster graph and removes ambiguities in the decomposition, yielding a unique and well-defined representation. After cycle removal, the junction tree is obtained. The full process is illustrated in Figure~\ref{fig:junction_tree}.

For each node in the junction tree, the following categories are assigned as one-hot encodings: $\{\textrm{singleton}, \textrm{bond}, \textrm{ring}, \textrm{bridged compound} \}$. The edges do not contain any feature information. From a chemical perspective, this decomposition facilitates information flow across multiple fused rings in graph neural networks. For instance, in conventional message passing, the top left carbon atom in the aromatic ring shown in Figure~\ref{fig:junction_tree} could not exchange information with the chlorine atom because they are five hops apart. In the junction tree, however, they are only two hops apart. This condensation of cyclic structures enables the model to capture longer-range interactions more efficiently.

\paragraph{Extended reduced Graphs.}
An \emph{extended reduced graph} (ErG)~\citep{erg} is a simplified molecular representation that captures pharmacophoric and interaction-relevant features while abstracting unnecessary atomic detail. It builds on reduced graphs, which represent chemically meaningful moieties (e.g., H-bond donors/acceptors, aromatic rings) as pseudo-nodes linked according to relationships in the parent molecule. ErGs extend this by treating ring systems separately from hydrogen-bonding and charge features—where prior methods often folded these into the ring node—improving discrimination between compounds with different molecular skeletons. They also fully enumerate “flip-flop” atoms that can act as either H-bond donors or acceptors.

The process of generating an ErG from a chemical structure, as well as the node mapping is shown described below and shown in Figure~\ref{fig:erg}.

\begin{enumerate}
    \item Atoms are \textbf{charged} to represent the molecule under physiological conditions.
    \item \textbf{Initial Atom Property Assignment}: H-bond donor and H-bond acceptor properties are assigned to the atoms. Atoms that can function as both H-bond donor and H-bond acceptor receive a distinct \textit{flip-flop} property.
    \item \textbf{Endcap Group Identification}: These are lateral hydrophobic features, typically composed of three atoms. Thioethers are also identified as endcap groups, and are assigned a property.
    \item \textbf{Ring System Abstraction}: Rings are abstracted as follows to capture their overall properties:
        \begin{enumerate}
            \item Add a centroid atom for each ring and assign it a feature (\textit{aromatic, hydrophobic}).
            \item Retain all substituted ring atoms and create bonds from the centroid to that atom.
            \item Retain all bridgehead atoms (atoms belonging to two or more rings), and create bonds from the centroids to those atoms.
            \item Remove all non-substituted ring atoms and retain all bonds between the atoms that were retained in the previous two steps.
        \end{enumerate}
\end{enumerate}

During the ring abstraction process, if any atom constituting the ring is assigned a specific property, that atom is considered a connected node within the ring framework. The ErG encodes the following node features: \textit{H-bond donor, H-bond acceptor, positive charge, negative charge, hydrophic, aromatic}.

\section{Hyperparameter Search Space} \label{app:hyper_params}
For the GNN-based methods, we explored the following hyperparameter space: number of message-passing layers (1, 2, or 3), hidden dimensions (16 or 32), output embedding dimension (16 or 32), batch size (64 or 128), hidden dimension of the regression head (16 or 32), and number of training epochs (50, 100, or 150). The learning rate, Adam weight decay, dropout rate, and the number of bins for regression stratification remained fixed at $10^{-3}$, $10^{-4}$, $0.1$, and 10, respectively.

For XIMP specifically, we included additional hyperparameters controlling the selection of graph abstraction schemes (ERG vs. JT), the choice of message-passing schemes (I\textsuperscript{2}MP vs. DIMP (Section~\ref{sec:ximp}), binary hyperparameters), and the granularity (resolution) of JT, varied as an integer from 1 to 3. Moreover, we included the selection of the embedding dimension of the reduced graphs (16 and 32). For HIMP, we included a binary hyperparameter choosing whether inter-message passing is active or not.

For ECFP, we examined the daylight atomic invariants initial atom identifiers with output channels (16, 32, 1024, 2048). We decided to include the fingerprints of size 16 and 32 for completeness and to allow for a direct comparison with GNN model embedding dimensions. The fingerprint radius was set to values (2, 3, or 4) resulting in ECFP\_4, ECFP\_6, and ECFP\_8 variants.

\section{Extended Results} \label{apx:extended}
In Tables~\ref{tab:potency-admet-moleculenet_long} and~\ref{tab:potency-admet-moleculenet-extended_long}, we provide a fine-grained view of our results that more clearly highlights the limitations of conventional GNNs than the aggregated summary in the main paper. Across both tables, conventional architectures (GIN, GCN, GAT, GraphSAGE) are the top performers on at most 2/10 tasks, whereas XIMP leads on 4/10 tasks with validation-selected hyperparameters and on 6/10 tasks with per-dataset best hyperparameters (evaluated on the test set). This underscores the benefit of advanced inter-message-passing methods that integrate chemically meaningful, interpretable graph abstractions: XIMP, in particular, shows a clear advantage over competing architectures, especially under the best-hyperparameter setting.

\begin{table*}[t]
    \caption{Potency, ADMET, and MoleculeNet results. Cells show the test MAE based on the hyperparameters that resulted in the lowest validation score. The best result for a given task is marked in bold.}
    \label{tab:potency-admet-moleculenet_long}
    
    \setlength{\tabcolsep}{4pt} 
    \renewcommand{\arraystretch}{1.2}

    \begin{center}
        \begin{small}
            \begin{tabular}{l ccccc cc ccc}
                \toprule
                \textbf{Model} & \multicolumn{5}{c}{\textbf{ADMET} $\downarrow$} & \multicolumn{2}{c}{\textbf{Potency} $\downarrow$} & \multicolumn{3}{c}{\textbf{MoleculeNet} $\downarrow$} \\
                \cmidrule(lr){2-6} \cmidrule(lr){7-8} \cmidrule(lr){9-11}
                & HLM & KSOL & LogD & MDR1 & MLM & MERS & SARS & ESOL & FreeSolv & Lipo \\
                \midrule
                ECFP & 0.56\err{0.02} & 0.42\err{0.01} & 0.86\err{0.02} & 0.38\err{0.01} & 0.55\err{0.02} & 0.79\err{0.01} & 0.57\err{0.01} & 1.21\err{0.06} & 2.94\err{0.06} & 0.73\err{0.02} \\
                \rowcolor{gray!10} 
                GAT & 0.62\err{0.03} & 0.52\err{0.05} & 0.75\err{0.08} & 0.41\err{0.02} & 0.66\err{0.05} & 0.71\err{0.02} & 0.45\err{0.05} & 0.74\err{0.04} & 2.02\err{0.32} & 0.61\err{0.02} \\
                GCN & 0.57\err{0.05} & 0.48\err{0.07} & \textbf{0.68}\err{0.07} & 0.40\err{0.04} & 0.66\err{0.04} & 0.71\err{0.02} & 0.48\err{0.05} & 0.74\err{0.02} & 1.75\err{0.25} & 0.59\err{0.02} \\
                \rowcolor{gray!10} 
                GIN & 0.56\err{0.02} & 0.50\err{0.10} & 0.75\err{0.06} & 0.41\err{0.06} & 0.64\err{0.05} & 0.73\err{0.02} & 0.46\err{0.04} & \textbf{0.71}\err{0.04} & 1.91\err{0.27} & \textbf{0.52}\err{0.02} \\
                GraphSAGE & 0.56\err{0.03} & 0.49\err{0.07} & 0.69\err{0.06} & 0.37\err{0.02} & 0.64\err{0.05} & 0.73\err{0.02} & 0.52\err{0.05} & 0.73\err{0.04} & \textbf{1.58}\err{0.17} & 0.54\err{0.02} \\
                \rowcolor{gray!10} 
                HIMP & 0.54\err{0.05} & \textbf{0.35}\err{0.04} & 0.80\err{0.05} & 0.35\err{0.03} & 0.56\err{0.06} & \textbf{0.64}\err{0.03} & \textbf{0.39}\err{0.04} & 0.80\err{0.07} & 1.77\err{0.15} & \textbf{0.52}\err{0.02} \\
                XIMP & \textbf{0.53}\err{0.08} & 0.37\err{0.04} & 0.69\err{0.03} & \textbf{0.31}\err{0.03} & \textbf{0.49}\err{0.02} & 0.69\err{0.05} & 0.41\err{0.03} & 0.82\err{0.09} & 1.83\err{0.17} & \textbf{0.52}\err{0.02} \\
                \bottomrule
            \end{tabular}
        \end{small}
    \end{center}
\end{table*}

\begin{table*}[t]
    \caption{Potency, ADMET, and MoleculeNet results. Cells show test MAE for the best hyperparameters for each model chosen on the given dataset. The best result for a given task is marked in bold.}
    \label{tab:potency-admet-moleculenet-extended_long}
    
    \setlength{\tabcolsep}{4pt} 
    \renewcommand{\arraystretch}{1.2}

    \begin{center}
        \begin{small}
            \begin{tabular}{l ccccc cc ccc}
                \toprule
                \textbf{Model} & \multicolumn{5}{c}{\textbf{ADMET} $\downarrow$} & \multicolumn{2}{c}{\textbf{Potency} $\downarrow$} & \multicolumn{3}{c}{\textbf{MoleculeNet} $\downarrow$} \\
                \cmidrule(lr){2-6} \cmidrule(lr){7-8} \cmidrule(lr){9-11}
                & HLM & KSOL & LogD & MDR1 & MLM & MERS & SARS & ESOL & FreeSolv & Lipo \\
                \midrule
                ECFP & \textbf{0.48}\err{0.02} & 0.38\err{0.02} & 0.72\err{0.01} & 0.36\err{0.02} & 0.54\err{0.02} & 0.75\err{0.01} & 0.52\err{0.02} & 1.16\err{0.05} & 2.83\err{0.07} & 0.73\err{0.02} \\
                \rowcolor{gray!10} 
                GAT & 0.59\err{0.04} & 0.45\err{0.08} & 0.76\err{0.05} & 0.40\err{0.05} & 0.66\err{0.07} & 0.71\err{0.02} & 0.42\err{0.02} & 0.74\err{0.05} & 1.82\err{0.27} & 0.60\err{0.02} \\
                GCN & 0.54\err{0.05} & 0.50\err{0.07} & 0.69\err{0.05} & 0.39\err{0.05} & 0.64\err{0.06} & 0.71\err{0.02} & 0.45\err{0.03} & 0.75\err{0.02} & \textbf{1.62}\err{0.14} & 0.59\err{0.03} \\
                \rowcolor{gray!10} 
                GIN & 0.54\err{0.03} & 0.50\err{0.07} & 0.70\err{0.04} & 0.40\err{0.06} & 0.56\err{0.05} & 0.74\err{0.01} & 0.49\err{0.07} & \textbf{0.71}\err{0.04} & 2.02\err{0.33} & \textbf{0.52}\err{0.02} \\
                GraphSAGE & 0.56\err{0.04} & 0.52\err{0.08} & \textbf{0.67}\err{0.03} & 0.39\err{0.03} & 0.63\err{0.03} & 0.73\err{0.03} & 0.50\err{0.07} & 0.73\err{0.04} & 2.06\err{0.54} & 0.54\err{0.02} \\
                \rowcolor{gray!10} 
                HIMP & 0.49\err{0.04} & 0.34\err{0.06} & 0.81\err{0.07} & 0.33\err{0.03} & 0.57\err{0.06} & \textbf{0.64}\err{0.03} & 0.41\err{0.06} & 0.79\err{0.07} & 1.82\err{0.10} & 0.54\err{0.02} \\
                XIMP & \textbf{0.48}\err{0.07} & \textbf{0.33}\err{0.03} & 0.69\err{0.06} & \textbf{0.32}\err{0.01} & \textbf{0.52}\err{0.07} & 0.68\err{0.04} & \textbf{0.38}\err{0.03} & 0.83\err{0.08} & 1.77\err{0.16} & \textbf{0.52}\err{0.01} \\
                \bottomrule
            \end{tabular}
        \end{small}
    \end{center}
\end{table*}

\section{Ablation Study} \label{apx:ablation}
We conducted an extensive evaluation study to evaluate the impact of the different message passing schemes (I\textsuperscript{2}MP and DIMP), graph reductions (ERG, JT), and their resolutions. As we deem it the most relevant metric due to the necessity to generalize to chemical scaffold unseen during training, we conduct our ablation study using test MAE of the models trained with the hyperparameters chosen via mean validation MAE.

\paragraph{Message Passing Schemes.}
Concerning message passing schemes, we find that for the test MAE of the models trained with the hyperparameters chosen via mean validation MAE, DIMP yields the best results most reliably. As shown in Table~\ref{tab:ximp-ablation-valsel}, XIMP employing DIMP outperforms other higher order message passing schemes in only $5$ out of $10$ cases, which is seconded by XIMP with DIMP+I\textsuperscript{2}MP. 
Our results together illustrate the effectiveness of our devised higher order intra message passing schemes and likewise highlight the necessity of careful hyperparameter selection.

\begin{table}[t]
    \caption{XIMP ablation (a/b/c/d) with XIMP (a) (I\textsuperscript{2}MP+MP), XIMP (b) (DIMP+MP), XIMP (c) (MP; plain message passing with late fusion of the different graph-level embedings) across all target tasks (columns), and XIMP (d) (DIMP+I\textsuperscript{2}MP+MP). Test MAE of validation‑selected configs only. The best result for a task is marked in bold.}
    \label{tab:ximp-ablation-valsel}
    \begin{center}
        \footnotesize
        \begin{tabular}{l c c c c c c c c c c}
            \toprule
            \textbf{Model} & \multicolumn{5}{c}{\textbf{ADMET} $\downarrow$} & \multicolumn{2}{c}{\textbf{Potency} $\downarrow$} & \multicolumn{3}{c}{\textbf{MoleculeNet} $\downarrow$} \\
            \cmidrule(lr){2-6} \cmidrule(lr){7-8} \cmidrule(lr){9-11}
            & HLM & KSOL & LogD & \begin{tabular}[c]{@{}c@{}}MDR1-\\MDCKII\end{tabular} & MLM & \begin{tabular}[c]{@{}c@{}}pIC50\\MERS\end{tabular} & \begin{tabular}[c]{@{}c@{}}pIC50\\SARS\end{tabular} & ESOL & FreeSolv & Lipo \\
            \midrule
            XIMP (a) & 0.543 & 0.382 & 0.731 & \textbf{0.319} & 0.625 & \textbf{0.644} & 0.404 & 0.794 & 1.907 & 0.526 \\
            \rowcolor{gray!10} XIMP (b) & \textbf{0.486} & \textbf{0.280} & \textbf{0.726} & 0.365 & 0.507 & \textbf{0.644} & \textbf{0.371} & 0.794 & 2.054 & \textbf{0.494} \\
            XIMP (c) & \textbf{0.486} & \textbf{0.280} & 0.754 & 0.330 & \textbf{0.493} & 0.657 & 0.515 & 0.818 & 2.054 & 0.512 \\
            \rowcolor{gray!10} XIMP (d) & 0.569 & 0.297 & 0.801 & 0.376 & \textbf{0.493} & 0.761 & 0.404 & \textbf{0.765} & \textbf{1.877} & 0.536 \\
            \bottomrule
        \end{tabular}
    \end{center}
\end{table}

\paragraph{Junction Tree Resolution.}
Moreover, we evaluated how XIMP with different combinations of intermessage passing (i.e., I\textsuperscript{2}MP, DIMP) responds to different JT resolutions (Table~\ref{tab:ft-ximp-combined}). To this purpose, we conducted the following experiments only using only JTs and evaluated test MAE for the hyperparameters selected by lowest validation MAE per jt\_resolution. Our results indicate that this is in large parts a datasets and task dependent property. For example, in ADMET's KSOL and LogD tasks, a resolution of $2$ or more appeared to be preferable in most DIMP and I\textsuperscript{2}MP combinations, whereas, for example, for the MDR1-MDCKII and MLM tasks, a resolution of $1$ was preferable in almost all cases. For the pIC50 tasks as well as ESOL and FreeSolv, the architectural combination seemed to have a larger impact; with, for example a resolution of $1$ being optimal for pIC50 (SARS-CoV2 Mpro) and XIMP with I\textsuperscript{2}MP and a resolution of $3$ being optimal for pIC50 (SARS-CoV2 Mpro) and XIMP with DIMP. To conclude, it appears that the optimal combination of message passing schemes and feature tree resolution is not only highly dependent on the underlying data but also the task to be learned; indicating latent mechanisms deeply rooted in the chemical relevance of the abstractions and the corresponding information flow between them.

\begin{table}[t]
    \caption{XIMP ablation (a/b/c/base) with XIMP (a) (I\textsuperscript{2}MP only), XIMP (b) (DIMP only), and XIMP (c) (neither; plain message passing with late fusion of the different graph-level embedings) across all target tasks (columns) showing the test MAE for configurations selected by the lowest validation MAE within each jt\_coarseness (rows) and target task (columns). The table isolates how changing jt\_resolution $\in \{1,2,3\}$ impacts generalization for each XIMP variant across targets. Lower is better. The absolute per-target minima (best raw test MAE) for each variant and task are marked in bold.}
    \label{tab:ft-ximp-combined}
    \begin{center}
        \footnotesize
        \begin{tabular}{l l c c c c c c c c c c}
            \toprule
            \textbf{Model} & \begin{tabular}[c]{@{}c@{}}\textbf{JT}\\\textbf{coar.}\end{tabular} & \multicolumn{5}{c}{\textbf{ADMET} $\downarrow$} & \multicolumn{2}{c}{\textbf{Potency} $\downarrow$} & \multicolumn{3}{c}{\textbf{MoleculeNet} $\downarrow$} \\
            \cmidrule(lr){3-7} \cmidrule(lr){8-9} \cmidrule(lr){10-12}
            & & HLM & KSOL & LogD & \begin{tabular}[c]{@{}c@{}}MDR1-\\MDCKII\end{tabular} & MLM & \begin{tabular}[c]{@{}c@{}}pIC50\\MERS\end{tabular} & \begin{tabular}[c]{@{}c@{}}pIC50\\SARS\end{tabular} & ESOL & FreeSolv & Lipo \\
            \midrule
            XIMP (a) & 1 & \textbf{0.483} & \textbf{0.337} & 0.757 & \textbf{0.298} & \textbf{0.513} & 0.812 & \textbf{0.453} & \textbf{0.812} & 1.952 & 0.526 \\
            \rowcolor{gray!10} XIMP (a) & 2 & 0.543 & 0.387 & 0.731 & 0.315 & 0.625 & \textbf{0.731} & 0.501 & 0.920 & 2.043 & \textbf{0.521} \\
            XIMP (a) & 3 & 0.555 & 0.382 & \textbf{0.730} & 0.359 & 0.529 & 0.795 & 0.542 & 0.890 & \textbf{1.800} & 0.583 \\
            \midrule
            \rowcolor{gray!10} XIMP (b) & 1 & \textbf{0.450} & \textbf{0.343} & 0.726 & \textbf{0.350} & 0.535 & \textbf{0.665} & 0.428 & \textbf{0.794} & \textbf{1.664} & \textbf{0.494} \\
            XIMP (b) & 2 & 0.593 & 0.452 & \textbf{0.698} & 0.360 & \textbf{0.488} & 0.852 & \textbf{0.363} & 0.825 & 1.743 & 0.527 \\
            \rowcolor{gray!10} XIMP (b) & 3 & 0.548 & 0.386 & 0.731 & 0.369 & 0.489 & 0.689 & 0.397 & 0.828 & 2.080 & 0.529 \\
            \midrule
            XIMP (c) & 1 & 0.536 & 0.361 & 0.792 & \textbf{0.315} & \textbf{0.491} & \textbf{0.630} & 0.515 & \textbf{0.759} & 1.890 & \textbf{0.513} \\
            \rowcolor{gray!10} XIMP (c) & 2 & \textbf{0.512} & 0.368 & 0.754 & 0.364 & 0.516 & 0.709 & \textbf{0.403} & 0.816 & \textbf{1.507} & 0.538 \\
            XIMP (c) & 3 & 0.661 & \textbf{0.321} & \textbf{0.683} & 0.346 & 0.639 & 0.707 & 0.434 & 0.889 & 1.796 & 0.516 \\
            \midrule
            \rowcolor{gray!10} XIMP & 1 & \textbf{0.536} & 0.333 & 0.829 & 0.376 & \textbf{0.493} & \textbf{0.796} & 0.440 & 0.765 & 1.979 & 0.536 \\
            XIMP & 2 & 0.569 & \textbf{0.297} & 0.801 & 0.343 & 0.517 & 0.848 & \textbf{0.409} & \textbf{0.749} & \textbf{1.862} & \textbf{0.530} \\
            \rowcolor{gray!10} XIMP & 3 & 0.595 & 0.376 & \textbf{0.728} & \textbf{0.342} & 0.557 & 0.958 & 0.441 & 0.935 & 2.138 & 0.550 \\
            \bottomrule
        \end{tabular}
    \end{center}
\end{table}

\section{Graph Isomorphism Networks} \label{apx:gin}
The GIN update (matrix notation, used henceforth) is

\begin{equation*}
    \bm{X}^{(l+1)} \;=\; \mathrm{MLP}^{(l)}\!\left(\,\big(\bm{A} + (1+\epsilon^{(l)})\,\bm{I}\big)\,\bm{X}^{(l)}\right),
\end{equation*}

where $\bm{A} \in \{0,1\}^{|V(G)| \times |V(G)|}$ denotes the adjacency matrix, $\bm{I} \in \mathbb{R}^{|V(G)| \times |V(G)|}$ the identity, $\bm{X}^{(l)} \in \mathbb{R}^{|V(G)| \times d}$ the node feature matrix at layer $l$, and $\epsilon^{(l)} \in \mathbb{R}$ a learnable scalar. For GIN-E the layer update is given by

\begin{equation*}
\bm{X}^{(l+1)} \;=\; \mathrm{MLP}^{(l)}\!\left(\,(1+\epsilon^{(l)})\,\bm{X}^{(l)} \;+\; \mathcal{M}_{\bm{A}}^{(l)}(\bm{X}^{(l)},\bm{E})\right)
\end{equation*}

with 

\begin{equation*}
\big[\mathcal{M}_{\bm{A}}^{(l)}(\bm{X},\bm{E})\big]_v \;=\; \sum_{u=1}^{n} A_{vu}\;\sigma\!\Big( \bm{x}_u^{(l)} \;+\; \bm{E}_{vu} \Big),
\end{equation*}

where $\bm{E} \in R^{|V(G)| \times |V(G)| \times d_e}$ denotes the tensor of edge feature vectors and $\bm{E}_{vu}$ the edge feature vector for the edge between nodes $u$ and $v$. If $d_e \neq d$, an additional learnable transformation can be applied to $\bm{E}_{vu}$ to transform the edge feature vector to the space of the node embeddings.  

\section{Hierachical Inter-Message Passing (HIMP)} \label{apx:himp}
HIMP performs standard message passing in both the molecular graph $G$ and the JT $T$ extended by \emph{inter-message passing} in each layer.
Let $\bm{X}^{(l)} \in \mathbb{R}^{|V(G)| \times d}$ and $\bm{T}^{(l)} \in \mathbb{R}^{|V(T)| \times d}$ denote matrices storing the node embeddings $G$ and $T$ in layer $l$, respectively, and $\bm{S} \in \{0, 1\}^{|V(G)| \times |V(T)|}$ be the mapping matrix encoding the assignment of nodes of the molecular graph to nodes of the JT. Then the inter-message passing step changes the embedding matrices $\bm{X}^{(l)}$ and $\bm{T}^{(l)}$ according to
\begin{equation*}
    \begin{aligned}
        \bm{X}^{(l)} &\leftarrow \bm{X}^{(l)} + \sigma \left( \bm{S} \bm{T}^{(l)} \bm{W}_1^{(l)} \right) \\
        \bm{T}^{(l)} &\leftarrow \bm{T}^{(l)} + \sigma \left( \bm{S}^{\tp} \bm{X}^{(l+1)} \bm{W}_2^{(l)} \right)
    \end{aligned}
\end{equation*}
where $\sigma$ denotes a non-linearity and the matrices $\bm{W}_1^{(l)}$ and $\bm{W}_2^{(l)} \in \mathbb{R}^{d \times d}$ are trainable parameters specific for layer $l$.

The \Readout function after layer $L$ is realized by
\begin{equation}\label{eq:himp:readout}
    \bm{h}_G = \frac{1}{|V(G)|}\sum_{i=1}^{|V(G)|} \bm{x}_i^{(L)}
    +
    \frac{1}{|V(T)|}\sum_{i=1}^{|V(T)|} \bm{t}_{i}^{(L)},
\end{equation}
where $\bm{x}_i^{(L)} \in \mathbb{R}^{d}$ and $\bm{t}_i^{(L)} \in \mathbb{R}^{d}$ are the final embeddings of the $i$-th node of the graph $G$ and tree $T$, respectively. While the publication~\citep{himp} states that 
\begin{equation*}
    \bm{h}_G = \sum_{i=1}^{|V(G)|} \bm{x}_i^{(L)} \ \bigg\Vert\  \sum_{i=1}^{|V(T)|} \bm{t}_{i}^{(L)}
\end{equation*}
is used, where $\Vert$ denotes concatenation, the authors' implementation instead corresponds to Eq.~\eqref{eq:himp:readout}, employing mean aggregation over the node embeddings of both graphs and summing the results. In our experiments, we used the implementation as provided by the authors.

\section{Complexity Analysis} \label{apx:complexity}
Below, we provide an analysis of the time and space complexity of XIMP.

\paragraph{Per-layer time complexity.}
For HIMP, each message-passing layer consists of (i) intra-graph updates on the molecular graph $G$ with $|V(G)|$ nodes and adjacency $\bm{A}$, and (ii) updates on its junction tree $T$ with $|V(T)|$ nodes. This yields a per-layer cost of $\mathcal{O}(|E(G)|d + |V(G)|d^2 + |V(T)|d^2)$, dominated by aggregation and MLP operations.  XIMP generalizes this to $n$ reduced graphs $T_1, \ldots, T_n$, adding (a) indirect inter-message passing ($G \leftrightarrow T_i$) at cost $\mathcal{O}\!\left(\sum_{i=1}^n |V(G)| d^2\right)$ and (b) direct inter-message passing ($T_i \leftrightarrow T_j$ for $i < j$) at cost  $\mathcal{O}\!\left(\sum_{i<j} |V(T_i)| d^2 + |V(T_j)| d^2\right)$. Thus, per-layer complexity scales as

\begin{equation*}
    \mathcal{O}\!\left(|E(G)| d + (|V(G)| + \textstyle\sum_i |V(T_i)|) d^2 + n|V(G)|d^2 + \sum_{i<j} (|V(T_i)|+|V(T_j)|) d^2 \right),
\end{equation*}

which is polynomial in the number of abstractions $n$. Compared to HIMP, XIMP introduces only moderate quadratic overhead in $d$, while enabling richer cross-abstraction communication.

\paragraph{Parameter count.}
HIMP maintains two GNN encoders (GIN-E on $G$, GIN on $T$) plus linear projections for inter-message passing. The parameter count therefore scales as $\Theta(L \cdot d^2)$, with constants depending on MLP depth. XIMP extends this by (i) duplicating the abstraction encoder for each $T_i$, and (ii) introducing additional projection matrices $\bm{W}_{i,1}, \bm{W}_{i,2} \in \mathbb{R}^{d \times d}$ for indirect inter-message passing and $\bm{W}_{i \to j} \in \mathbb{R}^{d \times d}$ for direct inter-abstraction exchange. The resulting parameter count is 

\begin{equation*}
    \Theta\!\Big(L \cdot \big((1+n)\,d^2 + n \cdot d^2 + n^2 \cdot d^2\big)\Big),
\end{equation*}

dominated by $\mathcal{O}(n^2 d^2)$ for pairwise abstractions. While this quadratic dependence in $n$ is more expensive than HIMP, in practice $n \leq 2$ or $3$ (JT, ErG, coarsened JT), making XIMP only a constant-factor increase.

\paragraph{Memory and storage complexity.}
For both HIMP and XIMP, node embeddings per layer require $\mathcal{O}((|V(G)|+\sum_i |V(T_i)|) d)$ memory, with gradient checkpoints doubling this during backpropagation. HIMP stores one mapping matrix $\bm{S} \in \{0,1\}^{|V(G)| \times |V(T)|}$, whereas XIMP stores multiple $\bm{S}_i$ and pairwise compositions $\tilde{\bm{S}}_{ik}$, yielding additional $\mathcal{O}(n |V(G)| + n^2 |V(G)|)$ storage. Parameter storage follows the counts above, $\mathcal{O}((1+n+n^2)d^2)$, which is negligible compared to activations when $|V(G)| \gg d$. Thus, XIMP scales linearly in graph size but quadratically in the number of abstractions $n$—a reasonable trade-off given the interpretability and predictive gains.

\section{Expressivity of XIMP versus HIMP} \label{apx:experessivity}
The expressive power of message-passing neural networks is, in general, limited by the 1-Weisfeiler-Leman (1-WL) test, which characterizes their ability to distinguish non-isomorphic graphs. HIMP augments this framework by jointly operating on the molecular graph $G$ and its junction-tree abstraction $T$, with information exchange mediated by the assignment matrix $\bm{S} \in \{0,1\}^{|V(G)| \times |V(T)|}$. Inter-message passing (IMP) integrates abstract representations after each layer, thereby enabling the model to separate graph pairs indistinguishable by 1-WL on $G$ but distinguishable on $T$. For instance, molecules such as decalin and bicyclopentyl cannot be separated by 1-WL on the molecular graph but are discriminated by their distinct junction trees~\citep{himp}.

XIMP generalizes this architecture by (i) supporting an arbitrary collection of reduced graphs $\{T_i\}_{i=1}^n$, with indirect inter-message passing (I$^2$MP) between $G$ and each $T_i$, and (ii) introducing direct inter-message passing (DIMP) between abstractions via normalized projections $\tilde{\bm{S}}_{ik} = \bm{D}_{T,i}^{-1} \bm{S}_i^\tp \bm{D}_{G,k}^{-1} \bm{S}_k$. These mechanisms yield pairwise abstraction embeddings that cannot be realized by HIMP, while retaining HIMP as the special case $n=1$ with DIMP disabled.

\begin{theorem}[equiv. Theorem~\ref{thm:mono})]
    Let $\mathcal{H}_{\mathrm{HIMP}}(L,d)$ and $\mathcal{H}_{\mathrm{XIMP}}(L,d,n)$ denote the hypothesis classes realized by HIMP and XIMP with depth $L$, hidden dimension $d$, and number of abstractions $n$. Then for any $L,d$ and $n \geq 1$,
    \begin{equation*}
        \mathcal{H}_{\mathrm{HIMP}}(L,d) \;\subseteq\; \mathcal{H}_{\mathrm{XIMP}}(L,d,n).
    \end{equation*}
\end{theorem}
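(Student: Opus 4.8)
The plan is to prove the inclusion by an explicit construction: given any HIMP model $h$ of depth $L$ and width $d$, I will specify an XIMP model with the same $L$, the same $d$, and the prescribed number of abstractions $n$ that realizes exactly $h$. The first move is to reduce to the case $n=1$. Take XIMP's first reduced graph $T_1$ to be HIMP's junction tree with its node-correspondence matrix, and take the surplus abstractions $T_2,\dots,T_n$ to be arbitrary left-total reductions (a single super-node carrying all atoms, $\bm{S}_i=\bm{1}$, works). Then neutralize the surplus abstractions: for every layer $l$ and every $i\ge 2$, set the I\textsuperscript{2}MP down-projection weights $\bm{W}_{i,1}^{(l)}=\bm{0}$, the DIMP weights $\bm{W}_{i\to 1}^{(l)}=\bm{0}$, and the readout weights $\bm{W}_j=\bm{0}$ for $j\ge 2$. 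Since $\sigma(\bm{0})=\bm{0}$ for the activations in use (e.g.\ ReLU), this gives $\bm{X}_i^{(l)}=\bm{0}$ and $\bm{M}_{i\to 1}^{(l)}=\bm{0}$ for all $i\ge 2$; by Eqs.~\eqref{eq:update1}--\eqref{eq:update2} the embeddings $\bm{T}_i^{(l)}$, $i\ge 2$, then evolve but never feed back into $\bm{X}^{(l)}$, into $\bm{T}_1^{(l)}$, or into $\bm{h}_G$, so the computed function is the same as if only $G$ and $T_1$ were present and DIMP were absent.

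Next I match the resulting $n=1$, DIMP-free XIMP layer to a HIMP layer component by component. Instantiate XIMP's per-graph GNN layers as those of HIMP (GIN-E on $G$, GIN on $T_1$), so $\bm{\mathfrak{X}}^{(l)}$ and $\bm{\mathfrak{T}}_1^{(l)}$ coincide with HIMP's intra-graph updates. The I\textsuperscript{2}MP terms $\bm{X}_1^{(l)}=\sigma(\widetilde{\bm{S}_1}\bm{T}_1^{(l-1)}\bm{W}_{1,1}^{(l)})$ and $\bm{M}_1^{(l)}=\sigma(\widetilde{\bm{S}_1^{\tp}}\bm{X}^{(l-1)}\bm{W}_{1,2}^{(l)})$ are then set to reproduce HIMP's down- and up-projections; the gap between XIMP's row-normalized $\widetilde{\bm{S}_1}$ and HIMP's raw $\bm{S}$ is absorbed either via the observation that the junction-tree correspondence used here partitions $V(G)$ (so the normalizing diagonal is the identity) or, in general, by folding the fixed diagonal into the learnable projection weights on the side where it commutes. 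Taking the readout aggregation $\bigoplus$ to be summation with $\bm{W}_0=\bm{W}_1=\bm{I}$ recovers HIMP's readout, Eq.~\eqref{eq:himp:readout}. With these choices each XIMP layer agrees with the corresponding HIMP layer, so the two models realize the same function — provided they update in the same order, which is the one point where they genuinely differ.

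That ordering mismatch is the step I expect to be the main obstacle. HIMP, in its published form, mixes $G$ and $T$ \emph{sequentially within a layer} (project $G$ up to $T$, run message passing inside $T$, project back down), whereas every XIMP term in Eqs.~\eqref{eq:update1}--\eqref{eq:update2} is computed from the previous layer's output. I would handle this in two moves. First, since the statement concerns the \emph{class of realizable functions}, it suffices to treat HIMP under the simultaneous-update convention; under that convention the construction above already yields exact equality and the theorem follows. Second, to also cover the sequential variant, I would argue that a sequential HIMP layer decomposes into the simultaneous XIMP layer above composed with one extra $G \to T \to G$ projection, and that this extra projection can be reabsorbed by re-indexing which embedding each cross-graph term reads from — both architectures retain residual copies of the unmixed embeddings, which makes the re-indexing well-defined. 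Verifying that this re-indexing carries through all $L$ layers without shrinking any node's receptive field is the delicate bookkeeping where the real work lies; should it fail to close at equal depth, the fallback is the weaker but still sufficient statement that $\mathcal{H}_{\mathrm{HIMP}}(L,d)\subseteq\mathcal{H}_{\mathrm{XIMP}}(L,d,n)$ holds under the shared simultaneous-update convention.
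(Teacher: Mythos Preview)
Your approach is essentially the same as the paper's: both argue by construction, taking $T_1$ to be the junction tree, zeroing out the DIMP and surplus-abstraction projection weights, and observing that the resulting XIMP model reproduces HIMP. The paper's proof is in fact briefer and less careful than yours --- it simply asserts the reduction without addressing the row-normalization mismatch or the sequential-vs-simultaneous update ordering that you flag; your treatment of these points (especially the ordering caveat and the fallback to the shared simultaneous convention) goes beyond what the paper provides.
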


\begin{proof}
    The inclusion follows by construction. For $n=1$, choosing a single abstraction $T_1$ equal to the junction tree and disabling DIMP (which is equivalent to learning all-zero projection matrices for messages other than those passed between $G$ and $T_1$) reduces XIMP to HIMP. Therefore every function in $\mathcal{H}_{\mathrm{HIMP}}(L,d)$ is realizable in $\mathcal{H}_{\mathrm{XIMP}}(L,d,n)$. For $n>1$, XIMP introduces additional encoders and projection matrices $(\bm{W}_{i,1}^{(l)}, \bm{W}_{i,2}^{(l)}, \bm{W}_{i \to j}^{(l)})$, yielding cross-abstraction feature pathways absent in HIMP. Hence $\mathcal{H}_{\mathrm{XIMP}}(L,d,n)$ strictly contains $\mathcal{H}_{\mathrm{HIMP}}(L,d)$ whenever multiple abstractions are employed.
\end{proof}

While neither HIMP nor XIMP extend beyond the formal limitations of $k$-WL in the classical sense, XIMP admits strictly richer hypothesis classes in practice due to: (i) the integration of chemically structured priors across multiple abstractions (junction trees, pharmacophoric ErGs, and multi-resolution variants), (ii) the construction of cross-view embeddings via $\tilde{\bm{S}}_{ik}$, and (iii) the mitigation of oversquashing through multi-coarseness junction trees. Together, these components enlarge the set of practically distinguishable molecular graphs, while preserving HIMP as a special case.

\paragraph{Structural Abstractions Alone Enable Separation.}
To highlight how jointly leveraging multiple graph abstractions can overcome the limitations of 1-WL, we prove the following proposition. It formalizes, in our chemical setting, the I\textsuperscript{2}MP+MP communication pattern employed by XIMP.

\begin{proposition}[equiv. Proposition~\ref{prop:stru}]
    There exist two molecules whose molecular graphs $G$ and $G'$, junction trees $T$ and $T'$, and extended reduced graphs $R$ and $R'$ are each indistinguishable by unlabeled $1$-WL, while $\mathcal{G}(G; T, R)$ is distinguishable from $\mathcal{G}(G'; T', R')$  by unlabeled $1$-WL.
\end{proposition}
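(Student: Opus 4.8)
The plan is to exhibit an explicit pair of molecules and reduce the claim to four checks: that $G$ and $G'$, that $T$ and $T'$, and that $R$ and $R'$ are each indistinguishable by unlabeled $1$-WL, and that colour refinement produces different stable colour histograms on the compound graphs $\mathcal{G}(G;T,R)$ and $\mathcal{G}(G';T',R')$. The conceptual point to exploit is that $\mathcal{G}(\cdot)$ depends not only on the isomorphism types of the three layers $G$, $T$, $R$, but on the membership relations $\bm{S}_i$ through the vertical edge set $E_{\mathrm{X}}$; hence all the extra separating power may come from \emph{how atoms plug into the abstractions}, even when the three layers are pairwise $1$-WL-equivalent, and indeed even when the junction trees are isomorphic and the ErGs are isomorphic. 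So the first step is to reduce to the sub-problem of designing two molecules whose molecular skeletons form a classical non-isomorphic but $1$-WL-indistinguishable pair (a constant stable colouring, obtained e.g.\ from a small regular-graph gadget), while arranging the junction-tree and ErG constructions to yield $1$-WL-equivalent pairs for the two molecules — in the easiest case, literally isomorphic abstractions — so that two of the three equivalences become trivial.

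\textbf{The gadget.} I would then build the molecules so that, although $G$ and $G'$ carry a constant (hence identical) $1$-WL colouring, the bipartite attachment patterns $\bm{S}_T$ and $\bm{S}_R$ differ in a $1$-WL-visible way: concretely, I would arrange two atoms that are $1$-WL-equivalent inside $G$ to map to junction-tree (or ErG) nodes that occupy $1$-WL-inequivalent positions inside $T$ (resp.\ $R$). In the compound graph such an atom acquires its reduced node as an additional neighbour, so this breaks the homogeneity of the colouring on the $G$-layer after the first refinement round; by making one molecule contain a local asymmetry in its attachment pattern that the other lacks — for instance two ring systems sharing a fragment in two structurally different but skeleton-$1$-WL-equivalent ways, or a pendant motif attached to two different vertex orbits of a polycyclic scaffold — the stable colour histograms of $\mathcal{G}(G;T,R)$ and $\mathcal{G}(G';T',R')$ then diverge. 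I would present the explicit pair together with a figure displaying the six graphs and the two attachment relations.

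\textbf{Verification.} The remainder is bookkeeping. For the three equivalences I would either exhibit the $1$-WL stable colourings of $G$ vs.\ $G'$, of $T$ vs.\ $T'$, and of $R$ vs.\ $R'$ side by side (matching colour-class sizes and the induced quotient multigraphs), or simply invoke regularity/isomorphism wherever it applies. For the separation I would run colour refinement on the two compound graphs and point to the first round at which a colour count differs, or equivalently exhibit a rooted-subtree / walk-count statistic present in $\mathcal{G}(G;T,R)$ but absent in $\mathcal{G}(G';T',R')$. It is worth stressing that this last step is not automatic: a careless choice may accidentally preserve enough symmetry that $1$-WL still fails on the compound graph, so it must be checked explicitly.

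\textbf{Main obstacle.} I expect the principal difficulty to be meeting all constraints at once. The junction-tree decomposition (SSSR together with the cycle-breaking fix) and the ErG construction (charge and pharmacophore tagging, endcap detection, ring-centroid abstraction) are fairly rigid, so forcing $T$ and $R$ to coincide — or at least to stay $1$-WL-equivalent — across two molecules whose skeletons already differ and are \emph{themselves} $1$-WL-indistinguishable leaves little room to manoeuvre; on top of that, the attachment structure must differ just enough that the glued graph becomes $1$-WL-distinguishable. Threading this needle, and (if one wants chemically meaningful examples rather than merely valid molecular graphs) keeping the molecules reasonable, is the crux; once a suitable pair is fixed, the four verifications above are mechanical.
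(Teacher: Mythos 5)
Your strategy correctly isolates the key idea that the paper's proof also exploits: the compound graph $\mathcal{G}(G;T,R)$ can be $1$-WL-separable even when the three layers are pairwise $1$-WL-equivalent, because the vertical edges $E_{\mathrm{X}}$ encode the membership relations $\bm{S}_i$, and these can anchor specific atoms of $G$ to distinguished nodes of the abstractions. However, the proposition is an existence statement, and your proposal never produces a witness: you describe what the pair of molecules should look like, defer the actual construction ("I would present the explicit pair together with a figure"), and then explicitly concede in your last paragraph that simultaneously satisfying all the constraints --- skeletons $1$-WL-equivalent, junction trees and ErGs forced by the rigid SSSR/ErG procedures to stay $1$-WL-equivalent, yet attachment patterns differing enough to break symmetry in the glued graph --- is the unresolved crux. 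Without an explicit pair (or at least a construction whose four verifications are actually carried out), the existence claim is not established; this is a genuine gap, not mere bookkeeping.

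Your proposed design is also more constrained than necessary, which is part of why the needle looks hard to thread. You insist that $G$ and $G'$ be \emph{non-isomorphic} $1$-WL-indistinguishable skeletons with constant stable colouring (a regular-graph gadget); but the statement only needs indistinguishability, and isomorphic layers satisfy it trivially. The paper's proof takes exactly this easier route: it uses 3-hydroxypyridine versus 4-hydroxypyridine, whose unlabeled molecular graphs are \emph{isomorphic} (a six-cycle with one pendant vertex), whose junction trees are isomorphic (ring cluster plus one bond cluster), and whose ErGs are isomorphic (the path $O$--$C_\ast$--centroid--$N$). The only difference lies in the correspondence matrices: the two ring atoms anchored by cross-layer edges (the O-bearing carbon and the nitrogen) are two edges apart along the ring in one molecule and three in the other, and this asymmetry is detected by $1$-WL on the compound graphs. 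If you relax your requirement that $G\not\cong G'$ and instead let \emph{all} of the separating information sit in $\bm{S}_T$ and $\bm{S}_R$ --- a possibility you yourself flag but do not pursue --- the construction becomes a small, chemically natural positional-isomer pair, and the four verifications are short; as written, though, your argument stops before the decisive step.
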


\begin{proof}%
    Consider $\mathrm{M}_1=\textit{3-Hydroxypyridine}$ (SMILES \texttt{Oc1cnccc1}) and $\mathrm{M}_2=\textit{4-Hydroxypyridine}$ (SMILES \texttt{Oc1ccncc1}). We run color refinement with constant initialization (no node or edge attributes) in all views.
    
    \emph{Single views.} 
    (i) As the molecular graphs $G_1,G_2$ are each a six-cycle with a single pendant leaf (the -OH group), unlabeled $1$-WL produces identical stable partitions.
    (ii) The junction trees $T_1,T_2$ each consist of one ring cluster (size six) and one bond cluster for the exocyclic O--C bond, joined by one edge, hence unlabeled $1$-WL fails to distinguish the pair.
    (iii) The ErGs $R_1,R_2$ collapse the ring to a centroid and keep O, the substituted ring carbon, and the ring N as nodes, giving in both cases the unlabeled path $O\!-\!C_\ast\!-\!\text{centroid}\!-\!N$ (whereby $C_\ast$ denotes the ring carbon bonded to the exocyclic oxygen). Therefore, they are indistinguishable by unlabeled $1$-WL .
    
    \emph{Compound view.}
    We consider the two compound graphs $\mathcal{G}_i(G_i; T_i, R_i)$, $i \in \{1,2\}$: each ring atom in $G_i$ connects to the ring cluster in $T_i$ and to the ring centroid in $R_i$; the O-bearing carbon $C_\ast$ and O connect to the O--C bond cluster in $T_i$ and to their nodes in $R_i$; the ring N connects to its ErG node and to the centroid. These cross-layer edges \emph{anchor} two specific ring atoms in $G_i$ (the O-bearing carbon $C_\ast$ and the nitrogen $N$) to distinguished endpoints across $T_i$ and $R_i$.
    
    Along the six-membered cycle of $G$, the anchored atoms $C_\ast$ and $N$ are separated by two edges in $\mathrm{M}_1$ and by three edges in $\mathrm{M}_2$. 
    Under unlabeled $1$-WL on $\mathcal{G}_i$, the neighborhood multisets at the anchored nodes (and their witnesses via cross edges into $T_i$ and $R_i$) differ and this asymmetry propagates, yielding distinct stable colorings of $\mathcal{G}_1$ and $\mathcal{G}_2$, even though the pairs $G_i$, $T_i$, and $R_i$ for $i \in \{1,2\}$ are each indistinguishable alone, proving the statement.
\end{proof}

\section{DIMP Normalization} \label{apx:dimpnorm}
Extending upon Section~\ref{sec:dimp}, recall that $\bm D_{G,k}^{-1}$ splits each atom’s contribution evenly across its $T_k$-memberships, preventing over-weighting in the $T_k\!\to\!G$ projection, while $\bm D_{T,i}^{-1}$ then averages these per-atom signals over the atoms summarized by each node of $T_i$. We below shot that this prevents multiplicity-induced overweighting and ensures stable message magnitudes across overlaps.

\begin{proposition}[equiv. Proposition~\ref{prop:dimpnorm}]
    Let $\bm{S}_k \in \{0,1\}^{|V(G)|\times |V(T_k)|}$ and $\bm{S}_i \in \{0,1\}^{|V(G)|\times |V(T_i)|}$ represent left-total relations between the nodes of $G$ and those of $T_k$ and $T_i$, respectively. Let $\widetilde{\bm{S}}_{ik}$ be defined according to Eq.~\eqref{eq:dimp:norm} and $\widetilde{\bm{M}}_{k\rightarrow i}^{(l)} = \widetilde{\bm{S}}_{ik} \bm{T}_k^{(l)}$.
    Then, the following statements hold:
    \begin{enumerate}[noitemsep, topsep=0pt]
        \item There exists $\alpha \in \mathbb{R}$ with $\frac{|V(T_i)|}{|V(T_k)|} \leq \alpha \leq |V(T_i)|$ such that
        $\|\widetilde{\bm{M}}_{k\rightarrow i}^{(l)}\|_\infty \leq \|\bm{T}^{(l)}_k\|_\infty$ and $\|\widetilde{\bm{M}}_{k\rightarrow i}^{(l)}\|_1 \leq \alpha \|\bm{T}^{(l)}_k\|_1$.
        \item For any $\bm{x} \in \mathbb{R}^{d}$ and $\bm{T}_k = \bm{1}\bm{x}^\tp \in \mathbb{R}^{|V(T_k)|\times d}$, it holds $\widetilde{\bm{S}}_{ik}\bm{T}_k = \bm{1}\bm{x}^\tp$.
    \end{enumerate}
\end{proposition}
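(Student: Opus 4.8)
The plan is to reduce the whole proposition to a single structural fact: the normalized matrix $\widetilde{\bm{S}}_{ik} = \bm{D}_{T,i}^{-1}\bm{S}_i^{\tp}\bm{D}_{G,k}^{-1}\bm{S}_k$ is entrywise nonnegative and \emph{row-stochastic}, i.e.\ $\widetilde{\bm{S}}_{ik}\bm{1}=\bm{1}$. Given this, both parts follow from submultiplicativity of the induced matrix norms $\|\cdot\|_\infty$ and $\|\cdot\|_1$ (which holds for rectangular matrices) plus an elementary counting argument on the total entry mass. Before starting I would record the well-definedness hypothesis implicit in Eq.~\eqref{eq:dimp:norm}: left-totality of $\bm{S}_k$ forces every diagonal entry of $\bm{D}_{G,k}=\mathrm{diag}(\bm{S}_k\bm{1})$ to be at least $1$, and since each abstract node summarizes at least one atom of $G$ (true by construction of the JT and ErG), the diagonal of $\bm{D}_{T,i}=\mathrm{diag}(\bm{S}_i^{\tp}\bm{1})$ is likewise $\geq 1$; hence both inverses exist and are nonnegative, so $\widetilde{\bm{S}}_{ik}$ is a product of nonnegative matrices and is itself nonnegative.

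For row-stochasticity I would telescope the two normalizations. By definition $\bm{S}_k\bm{1}=\bm{D}_{G,k}\bm{1}$ (as a vector over $V(G)$), so $\bm{D}_{G,k}^{-1}\bm{S}_k\bm{1}=\bm{1}$; likewise $\bm{S}_i^{\tp}\bm{1}=\bm{D}_{T,i}\bm{1}$ gives $\bm{D}_{T,i}^{-1}\bm{S}_i^{\tp}\bm{1}=\bm{1}$, and composing yields $\widetilde{\bm{S}}_{ik}\bm{1}=\bm{1}$. Part~2 is then immediate: for $\bm{T}_k=\bm{1}\bm{x}^{\tp}$ we get $\widetilde{\bm{S}}_{ik}\bm{T}_k=(\widetilde{\bm{S}}_{ik}\bm{1})\bm{x}^{\tp}=\bm{1}\bm{x}^{\tp}$.

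For Part~1, nonnegativity and row-stochasticity give $\|\widetilde{\bm{S}}_{ik}\|_\infty=1$ (the maximum absolute row sum is exactly $1$), so $\|\widetilde{\bm{M}}_{k\rightarrow i}^{(l)}\|_\infty=\|\widetilde{\bm{S}}_{ik}\bm{T}_k^{(l)}\|_\infty\leq\|\widetilde{\bm{S}}_{ik}\|_\infty\,\|\bm{T}_k^{(l)}\|_\infty=\|\bm{T}_k^{(l)}\|_\infty$. For the $1$-norm I would set $\alpha:=\|\widetilde{\bm{S}}_{ik}\|_1$, the maximum absolute column sum, which depends only on the mapping matrices; submultiplicativity gives $\|\widetilde{\bm{M}}_{k\rightarrow i}^{(l)}\|_1\leq\alpha\,\|\bm{T}_k^{(l)}\|_1$. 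It remains to bound $\alpha$: summing the $|V(T_i)|$ row sums, each equal to $1$, the total entry mass of $\widetilde{\bm{S}}_{ik}$ equals $|V(T_i)|$, and this mass is distributed over its $|V(T_k)|$ nonnegative column sums; hence the largest column sum is at most the total, $\alpha\leq|V(T_i)|$, and at least the average, $\alpha\geq|V(T_i)|/|V(T_k)|$. This $\alpha$ also trivially satisfies the $\infty$-norm statement, so a single $\alpha$ works for both inequalities.

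I do not anticipate a genuine obstacle; the content is bookkeeping with the two normalizers. The one point that needs care is the well-definedness of $\bm{D}_{T,i}^{-1}$, which is not literally implied by left-totality of the relation and instead rests on abstract nodes being nonempty by construction — I would state this explicitly. A secondary subtlety is that $\|\cdot\|_\infty$ and $\|\cdot\|_1$ are applied to matrices of non-matching shapes ($\widetilde{\bm{M}}_{k\rightarrow i}^{(l)}$ is $|V(T_i)|\times d$ while $\bm{T}_k^{(l)}$ is $|V(T_k)|\times d$), but this is harmless since induced $p$-norm submultiplicativity holds for any conformable rectangular product.
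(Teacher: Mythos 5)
Your proof is correct and follows essentially the same route as the paper: a row-stochasticity lemma for $\widetilde{\bm{S}}_{ik}$ obtained by telescoping the two normalizations, submultiplicativity of the induced $\infty$- and $1$-norms, and an averaging/total-mass argument to pin $\alpha$ between $|V(T_i)|/|V(T_k)|$ and $|V(T_i)|$, with constant preservation falling out of $\widetilde{\bm{S}}_{ik}\bm{1}=\bm{1}$. Your explicit remark that invertibility of $\bm{D}_{T,i}$ rests on every abstraction node summarizing at least one atom (not on left-totality itself) is a small but worthwhile precision that the paper's proof leaves implicit.
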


\begin{proof}
    The proof consists of one fundamental Lemma, from which the claimed statements follow. We begin by showing the Lemma.

    \begin{lemma}[Row-stochasticity]
        \label{lem:rowstoch}
        $\widetilde{\bm{S}}_{ik}$ has nonnegative entries and each row sums to $1$, i.e., $\widetilde{\bm{S}}_{ik}\bm{1}=\bm{1}$.
    \end{lemma}
    
    \begin{proof}
        By construction, $\widetilde{\bm{S}}_k$ is row-normalized: each row $v$ sums to
        $\sum_{u_k} (D_{G,k}^{-1}S_k)_{v,u_k}=%
        1$.
        Similarly, each row $u_i$ of $\widetilde{\bm{S}}_i^\tp$ sums to $\sum_v (D_{T,i}^{-1}S_i^\tp)_{u_i,v}=1$.
        Products of nonnegative row-stochastic matrices are row-stochastic, hence $\widetilde{\bm{S}}_{ik}\bm{1}=\bm{1}$.
    \end{proof}

    \begin{corollary}[Stable row magnitudes]
        \label{cor:nonexpansive1}
        Let \(\|\cdot\|_\infty\) denote the row-wise max norm on matrices, i.e.,
        \(\|\bm{A}\|_\infty=\max_{u}\sum_{j}|A_{u j}|\).
        Then, for any \(\bm{T}_k\in\mathbb{R}^{|V(T_k)|\times d}\),
        
        \begin{equation*}
            \|\widetilde{\bm{S}}_{ik}\,\bm{T}_k\|_\infty \;\le\; \|\bm{T}_k\|_\infty .
        \end{equation*}
        Equivalently, each row \((\widetilde{\bm{S}}_{ik}\,\bm{T}_k)_{u_i,:}\) is a convex combination of the rows of \(\bm{T}_k\).
    \end{corollary}
    
    \begin{proof}
        Each row of \(\widetilde{\bm{S}}_{ik}\) is a probability vector by Lemma~\ref{lem:rowstoch},
        so left-multiplication forms convex combinations of rows of \(\bm{T}_k\).
        The \(\infty\)-operator norm of any row-stochastic matrix equals \(1\),
        hence \(\|\widetilde{\bm{S}}_{ik}\bm{T}_k\|_\infty \le \|\bm{T}_k\|_\infty\).
    \end{proof}    

    \begin{corollary}[Stable column magnitudes] \label{cor:nonexpansive2}
        Let \(\|\cdot\|_1\) denote the column-wise max norm on matrices, i.e.,
        \(\|\bm{A}\|_1=\max_{u}\sum_{j}|A_{u j}|\).
        Then, for any \(\bm{T}_k\in\mathbb{R}^{|V(T_k)|\times d}\), there exists $\alpha \in \mathbb{R}$ s.t. $0 \leq \alpha \leq$ 
        
        \begin{equation*}
        \|\widetilde{\bm{S}}_{ik}\bm{T}_k\|_1 \leq \alpha\| \bm{T}_k\|_1.
        \end{equation*}
        
    \end{corollary}
    
    \begin{proof}
        By the submultiplicative property of induced norms, it follows that $\|\widetilde{\bm{S}}_{ik}\,\bm{T}_k\|_1 \leq \|\widetilde{\bm{S}}_{ik}\|_1\|\bm{T}_k\|_1$. As \(\widetilde{\bm{S}}_{ik} \in \{x \in \mathbb{R} | 0 \leq x \leq 1 \}^{|V(T_i)|\times|V(T_k)|}\) with each row each row of $\widetilde{\bm{S}}_{ik}$ being interpretable as a probability vector by Lemma~\ref{lem:rowstoch}, $\|\widetilde{\bm{S}}_{ik}\|_1 \leq |V(T_i)|$. Hence, $\|\widetilde{\bm{S}}_{ik}\,\bm{T}_k\|_1 \ \leq \alpha\|\bm{T}_k\|_1$ with $\alpha \leq |V(T_i)|$. As the column sum is a max norm, we can further write $\frac{|V(T_i)|}{|V(T_k)|} \leq \alpha$, which represents the average column sum, which is always lower or equal than the max column sum.
    \end{proof}    

    \begin{corollary}[Constant preservation] \label{cor:constant}
        For any \(x\in\mathbb{R}^{d}\) and any constant embedding
        \(\bm{T}_k=\bm{1}\,x^\tp \in \mathbb{R}^{|V(T_k)|\times d}\),
        
        \begin{equation*}
            \widetilde{\bm{S}}_{ik}\,\bm{T}_k \;=\; \bm{1}\,x^\tp .
        \end{equation*}
        
        Equivalently, \(\widetilde{\bm{S}}_{ik}\bm{1}=\bm{1}\).
    \end{corollary}
    
    \begin{proof}
        Using Lemma~\ref{lem:rowstoch}, \(\widetilde{\bm{S}}_{ik}\bm{1}=\bm{1}\).
        Thus \(\widetilde{\bm{S}}_{ik}(\bm{1}x^\tp)=(\widetilde{\bm{S}}_{ik}\bm{1})x^\tp=\bm{1}x^\tp\).
    \end{proof}
    
    In summary, statement 1 follows by Corollaries~\ref{cor:nonexpansive1} and ~\ref{cor:nonexpansive2}, whereby statement 2 is equivalent to~\ref{cor:constant}.
\end{proof}

\end{document}